  \def\title@font{\Large\bfseries}
  \let\ltx@maketitle\@maketitle
  \def\@maketitle{\bgroup%
    \let\ltx@title\@title%
    \def\@title{\resizebox{\textwidth}{!}{%
      \mbox{\title@font\ltx@title}%
    }}%
    \ltx@maketitle%
  \egroup}
\DeclareMathOperator*{\argmin}{arg\,min}
\newtheorem{theorem}{Theorem}[section]
\newtheorem{lemma}[theorem]{Lemma}
\theoremstyle{definition}
\theoremstyle{remark}
\newcommand{\norm}[1]{\lVert#1\rVert}
\newcommand{\Norm}[1]{\left\lVert#1\right\rVert}
\newcommand{\EE}[1]{\mathbb{E}\left[ #1 \right]}
\newcommand{\Idot}[2]{\left\langle #1 , #2 \right\rangle}
\newcommand{\idot}[2]{\langle #1 , #2 \rangle}
\newcommand{\iid}{\overset{iid}{\sim }}
\newcommand{\R}{\mathbb{R}}
\newcommand{\Z}{\mathbb{Z}}
\newcommand{\E}{\mathbb{E}}
\newcommand{\bP}{\mathbb{P}}
\newcommand{\bZ}{{\bf{Z}}}
\newcommand{\calF}{\mathcal{F}}
\newcommand{\calI}{\mathcal{I}}
\newcommand{\calX}{\mathcal{X}}
\newcommand{\calD}{\mathcal{D}}
\newcommand{\calN}{\mathcal{N}}
\newcommand{\calM}{\mathcal{M}}
\newcommand{\Ks}{K_\sigma}
\newcommand{\vY}{\vec{Y}}
\newcommand{\tp}{\tilde{p}}
\newcommand{\tP}{\tilde{P}}
\newcommand{\hP}{\widehat{P}}
\newcommand{\hpsi}{\hat{\psi}}
\newcommand{\bg}{\bar{g}}
\newcommand{\bA}{A}
\newcommand{\avec}{\vec{a}}
\newcommand{\ud}{\mathrm{d}}
\newcommand{\mxphi}{\varphi_{\mathrm{max}}}
\newcommand{\Ep}{\mathrm{E}}
\newcommand{\ie}{i.e.\ }
\newcommand{\where}{\mathrm{where }}
\newcommand{\Unif}{\text{Unif}}
\DeclareMathOperator*{\diam}{diam}
\DeclareMathOperator*{\Var}{Var}
\DeclareMathOperator*{\sign}{sign}
\DeclareMathOperator*{\GMM}{GMM}
\begin{document}

\twocolumn[
\aistatstitle{Fast Distribution To Real Regression} 
\aistatsauthor{ \hspace{3mm}Junier B. Oliva \And Willie Neiswanger \And Barnab{\'a}s P{\'o}czos \And Jeff Schneider \And  Eric Xing}
\aistatsaddress{ Carnegie Mellon University } 
]

\begin{abstract}
We study the problem of distribution to real regression, where one aims to regress a mapping $f$ that takes in a distribution input covariate $P\in \calI$ (for a non-parametric family of distributions $\calI$) and outputs a real-valued response $Y=f(P) + \epsilon$. 
This setting was recently studied in \cite{poczos2013distribution}, where the ``Kernel-Kernel'' estimator was introduced and shown to have a polynomial rate of convergence. 
However, evaluating a new prediction with the Kernel-Kernel estimator scales as $\Omega(N)$. This causes the difficult situation where a large amount of data may be necessary for a low estimation risk, but the computation cost of estimation becomes infeasible when the data-set is too large. To this end, we propose the Double-Basis estimator, which looks to alleviate this big data problem in two ways: first, the Double-Basis estimator is shown to have a computation complexity that is independent of the number of of instances $N$ when evaluating new predictions after training; secondly, the Double-Basis estimator is shown to have a fast rate of convergence for a general class of mappings $f\in\calF$.
\end{abstract}

\section{Introduction}
A great deal of attention has been applied to studying new and better ways to perform learning tasks involving static finite vectors. Indeed, over the past century the fields of statistics and machine learning have amassed a vast understanding of various learning tasks like density estimation, clustering, classification, and regression using simple real valued vectors. However, we do not live in a world of simple objects. From the contact lists we keep, the sound waves we hear, 
and the distribution of cells we have, complex objects such as sets, functions, and distributions are all around us. Furthermore, with ever-increasing data collection capacities at our disposal, not only are we collecting more data, but richer and more bountiful complex data are becoming the norm. 

This paper aims to make learning on massive data-sets of distributions tractable; we study distribution to real regression (DRR) where input covariates are arbitrary distributions and output responses are real values. We provide an estimator that scales well with data-set size and is efficient at evaluation-time. Furthermore, we prove that the estimator achieves a fast rate of convergence for a broad class of functions.

We consider a mapping $f: \calI \mapsto \R$ that takes $P \in \calI$, an input distribution from a family of distributions $\calI$, and produces $Y$ a real-valued response as: 
\begin{align}
Y=f(P)+\epsilon, \where\ \EE{\epsilon}=0,\ \EE{\epsilon^2}\leq\sigma^2_\epsilon.
\end{align} 
Of course, it is infeasible to directly observe a distribution in practice. Thus, we will work on a data-set of $N$ input sample-sets/responses: 
\begin{gather}
\calD = \{(\calX_i,Y_i)\}_{i=1}^N,\ \where \label{eq:dataset}\\
\calX_i = \{ X_{i1},\ldots,X_{in_i} \},\ X_{ij}\iid P_i \in \calI,
\end{gather}
and $Y_i = f(P_i)+\epsilon_i$. Further, $P_i \iid \Phi$, where $\Phi$ is some measure over $\cal I$ (see Figure \ref{fig:gmodel}).

\begin{figure}[h!]
\label{fig:gmodel}
  \centering
    \includegraphics[width=0.4\textwidth]{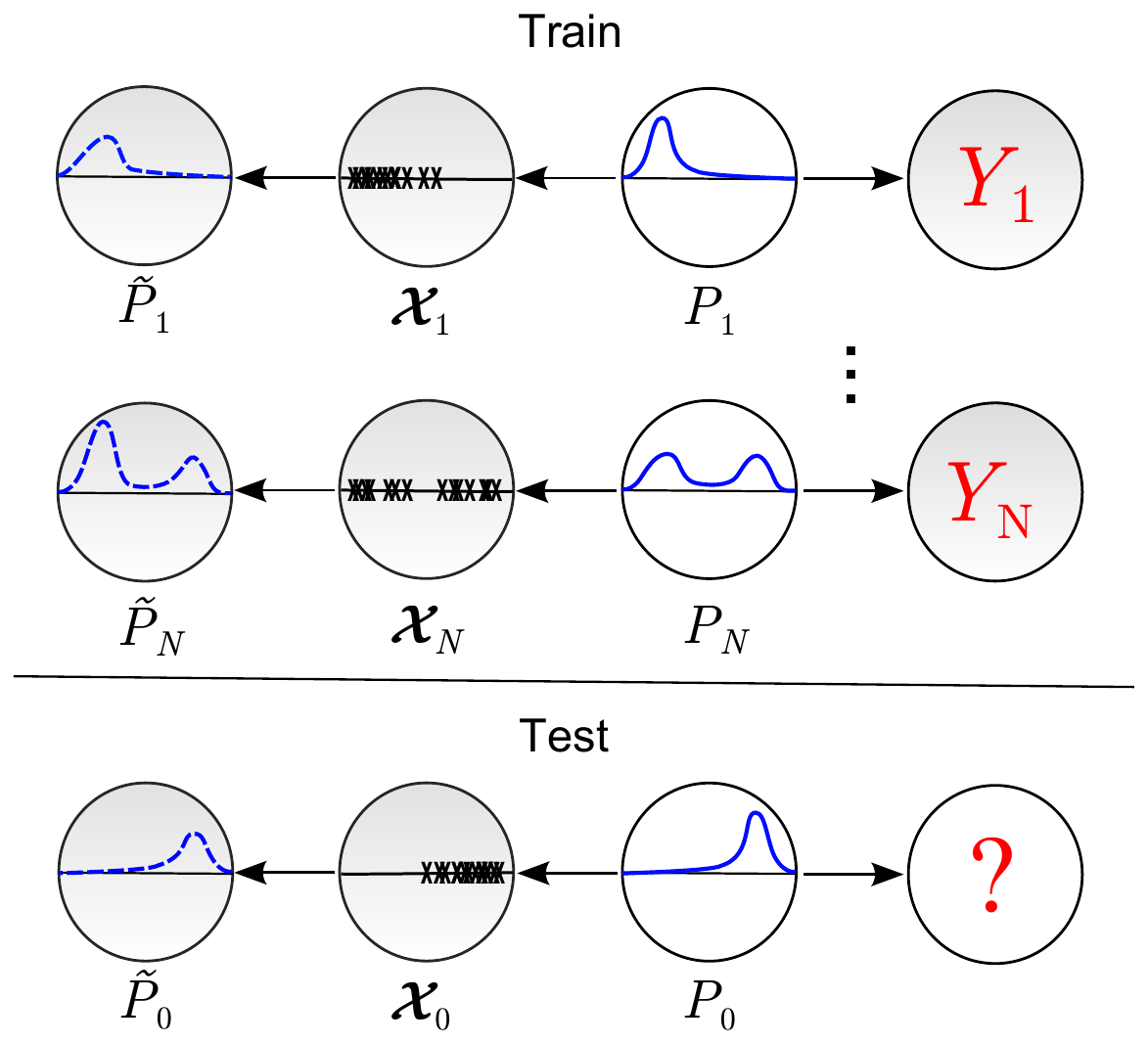}
    \caption{A graphical representation of our model. We observe a data-set of input sample-set/output response pairs $\{(\calX_i,Y_i) \}_{i=1}^N$, where $\calX_i = \{X_{i1},\ldots, X_{in_i} \}$, $X_{ij}\sim P_i$ and $Y_i = f(P_i) + \epsilon_i$, for some noise $\epsilon_i$. From these sample sets $\{\calX_i \}_{i=1}^N$ we build density estimates $\{\tP_i \}_{i=1}^N$ using projection series estimates \eqref{eq:coef-est}. These estimates will then be used in our response estimator \eqref{eq:OLSest}. }
    \vspace{-2.5mm}
\end{figure}

Many interesting problems across various domains fit the DRR model. For instance, one may be interested in studying the mapping that takes in the distribution of star locations in a galaxy and outputs the galaxy's age. Also, one may be consider a mapping that takes in the distribution of prices for stocks of a particular sector and outputs the future average change in stock price for that sector. 

In fact, many estimation tasks in statistics can be framed as a distribution to regression problem. For instance, in parameter estimation one studies a mapping that takes in a distribution (usually restricted to be in a parametric class of distributions) and outputs a corresponding parameter. We will see that our estimator can be used to leverage previously seen sample sets to outperform standard estimation procedures, to perform model selection when cross validation is expensive, or to perform parameter estimation when no analytical sample estimate is available. In effect, we shall show that this estimator, and the concept of distribution to real regression, is powerful enough to itself learn how to perform general statistical procedures. 

At its core, the problem of distribution to real value regression is a learning task over infinite dimensional objects (distributions) and  would benefit greatly from learning on data-sets with a large number of input/output pairs. Hence, this paper focuses on the case where one has a massive data-set in terms of instances, \ie $n_i = o(N)$. DRR for the case of general input distributions in a H\"{o}lder class and a smooth class of mappings has been previously studied in \cite{poczos2013distribution}. There, an estimator---the Kernel-Kernel estimator---analogous to the Nadaraya-Watson estimator \cite{tsybakov2008introduction} for functional distribution inputs was shown to have a polynomial rate of convergence. This rate is dependent on the dimensionality of the domains of the distributions, sample sizes, and a doubling dimension on the measure $\Phi$ over distributions, which, roughly speaking, controls the degrees of freedom of the input distributions. However, evaluating the estimator in \cite{poczos2013distribution} for new predictions scales as $\Omega(N)$ in the number of input/output instances in a data-set. Thus, the Kernel-Kernel estimator is not feasible for data-sets where the number of distributions, $N$, is in the high-thousands, millions, or even billions. Furthermore, the doubling dimension of $\Phi$ may be rather large, producing a slow convergence rate. In this paper we shall introduce an estimator for DRR, the Double-Basis estimator, 
which does not depend on $N$ for evaluating an estimate for a  new input distribution. Furthermore, we shall show that this estimator achieves a better rate of convergence that does not depend on the doubling dimension over a broad class of distribution to real mappings.

\section{Related Work}
As previously mentioned, the problem of DRR was studied in \cite{poczos2013distribution}, where the Kernel-Kernel estimator was introduced. Since the data-set one works with is \eqref{eq:dataset}, first one uses kernel density estimation (KDE) \cite{tsybakov2008introduction} on $\{\calX_1,\ldots,\calX_N\}$ to make density estimates $\{\tP_1,\ldots,\tP_N\}$. Similarly for an unseen query input sample set $\calX_0 \sim P_0$, one makes a KDE $\tP_0$. Then, the Kernel-Kernel estimator works as follows:
\begin{align}
&\hat{f}(\tP) = \sum_{i=1}^N W(\tP_i,\tP_0) Y_i,\ \where \label{eq:KKest}\\
&W(\tP_i,\tP_0) \nonumber \\
&= \begin{cases} \frac{K(D(\tP_i,\tP_0))}{\sum_j K(D(\tP_j,\tP_0))} &\mbox{if } \sum_j K(D(\tP_j,\tP_0))>0 \\
0 & \mbox{otherwise }. \end{cases} 
\end{align}
Here $K$ is taken to be a symmetric Kernel with bounded support, and $D$ is some metric over functions. Clearly, \eqref{eq:KKest} scales as $\Omega(N)$ in terms of the number of input distributions in ones data-set. Furthermore, if one uses a Gaussian KDE, and takes $D(\tP_i,\tP_0)=\norm{\tP_i-\tP_0}_2=\sqrt{\int(\tp_i-\tp_0)^2}$ (where $\tp_i$ is the pdf of $\tP_i$) and $n_i\asymp n$, then the computation required for evaluating \eqref{eq:KKest} is $\Omega(Nn^2)$.

DRR is related to the functional analysis, where one regresses a mapping whose input domain are functions \cite{ferraty2006nonparametric}. However, the objects DRR works over--distributions and their pdfs--are inferred through sets of samples drawn from the objects, with finite sizes. In functional analysis, the functions are inferred through observations of $(X,Y)$ pairs that are often taken to be an arbitrarily dense grid in the domain of the functions. For a comprehensive survey in functional analysis see \cite{ferraty2006nonparametric,ramsay2002applied}. Also, recently \cite{olivadistribution} studied the problem of distribution to distribution regression, where both input and output covariates are distributions. 

A common approach to performing ML tasks with distributions is to embed the distributions in a Hilbert space, then solve the tasks using kernel machines. Perhaps the most clear-cut of these methods is to fit a parametric model to distributions for estimating kernels \cite{jebara2004probability, jaakkola1999exploiting, moreno2003kullback}. Nonparametric methods over distributions have also been developed using kernels. For example, since we only observe distributions through finite sets, set kernels may be used \cite{smola2007hilbert}. Futhermore, the representer theorem was recently generalized for the space of distributions \cite{muandet2012learning}. Also, kernels based on nonparametric estimators of divergences have been explored \cite{poczos2012nonparametric,poczos2012image}. 

\section{Double-Basis Estimator}
We introduce the Double-Basis Estimator for DRR. First, we shall use orthonormal basis projection estimators \cite{tsybakov2008introduction} for estimating the densities of $P_i$ from $\calX_i$. Suppose that $\Lambda^l \subseteq \R^l$, the domain of input densities is compact s.t. $\Lambda = [a,b]$. Let $\{\varphi_i\}_{i\in\Z}$ be an orthonormal basis for $L_2(\Lambda)$. Then, the tensor product of $\{\varphi_i\}_{i\in\Z}$ serves as an orthonormal basis for $L_2(\Lambda^l)$; that is,
\begin{gather*}
\{\varphi_\alpha\}_{\alpha\in\Z^l} \quad \mathrm{where} \quad \varphi_\alpha(x) = \prod_{i=1}^l \varphi_{\alpha_i}(x_i),\ x\in \Lambda^l
\end{gather*}
serves as an orthonormal basis (so we have $\forall \alpha,\rho\in \Z^l,\ \langle\varphi_\alpha,\varphi_\rho \rangle= I_{\{\alpha=\rho\}}$).

Let $P\in \calI \subseteq L_2(\Lambda^l)$, then
\begin{gather}
p(x)=\sum_{\alpha\in\Z^l} a_\alpha(P)\varphi_\alpha(x)  \ \where\\
\quad a_\alpha(P) = \langle\varphi_\alpha, p\rangle = \int_{\Lambda^l} \varphi_\alpha(z)\ud P(z)\ \in \R \nonumber.
\end{gather}
where $p(x)$ denotes the probability density function of the distribution $P$.

Suppose that the projection coefficients $a(P) = \{a_\alpha(P)\}_{\alpha\in \Z^l}$ are as follows for $P\in\calI$:
\begin{gather}
\calI = \{ P : a(P) \in \Theta_l(\nu,\gamma,\bA),\ \norm{P}_2^2 \leq \bA \} \quad \where
\label{eq:sob-ellp}\\
\Theta_l(\nu,\gamma,\bA) = \left\{\{a_\alpha\}_{\alpha\in \Z^l} : \sum_{\alpha\in\Z^l} a_\alpha^2 \kappa_\alpha^2(\nu,\gamma) < \bA \right\},\nonumber\\
\kappa_\alpha^2(\nu,\gamma) = \sum_{i=1}^l(\nu_i|\alpha_i|)^{2\gamma_i}\ \mathrm{for}\ 
\nu_i,\gamma_i,\bA > 0. \nonumber
\end{gather}
See \cite{ingster2011estimation,laurent1996efficient} for other analyses with this type of assumption. The assumption in \eqref{eq:sob-ellp} will control the tail-behavior of projection coefficients and allow us to effectively estimate $P\in \calI$ using a finite number of projection coefficients on the empirical distribution of a sample.

Given a sample $\calX_i = \{X_{i1},\ldots,X_{in_i}\}$ where $X_{ij}\iid P_i \in \calI$, let $\hP_i$ be the empirical distribution of $\calX_i$; i.e. $\hP_i(X=X_{ij})=\frac{1}{n_i}$. Our estimator for $p_i$ will be:
\begin{align}
\tp_i(x) = \sum_{\alpha\ :\ \kappa_\alpha(\nu,\gamma)\leq t}a_\alpha(\hP_i)\varphi_\alpha(x) \quad \where \label{eq:coef-est}\\
a_\alpha(\hP_i) = \int_{\Lambda^l} \varphi_\alpha(z)\ud \hP_i(z) = \frac{1}{n_i}\sum_{j=1}^{n_i} \varphi_\alpha(X_{ij}) \label{eq:coef-hat}.
\end{align}
Choosing $t$ optimally\footnote{See appendix for details.} can be shown to lead to $\E[\norm{\tp_i-p_i}_2^2]=O(n_i^{-\frac{2}{2+\gamma^{-1}}})$, where $\gamma^{-1}=\sum_{j=1}^{l}\gamma_j^{-1}$, $n_i \rightarrow \infty$ \cite{nussbaum1983optimal}.

Next, we shall use random basis functions from Random Kitchen Sinks (RKS) \cite{rahimi2007random} to compute our estimate of the response. \cite{rahimi2007random} shows that if one has a shift-invariant kernel $K$ (in particular we consider the RBF kernel $K(x)=\exp(-x^2/2)$) then for $x,y \in \R^d$:
\begin{align}
&K(\Norm{x-y}_2/\sigma) \approx z(x)^Tz(y),\ \where\\
&z(x) \equiv\nonumber \\
& \sqrt{\tfrac{2}{D}}\left[\cos(\omega_1^Tx+b_1) \cdots \cos(\omega_D^Tx+b_D)\right]^T
\end{align}
with $\omega_i \stackrel{iid}{\sim} \calN(0,\sigma^{-2}I_d)$, $b_i \stackrel{iid}{\sim} \Unif[0,2\pi]$
Let $M_t = \{\alpha\ :\ \kappa_\alpha(\nu,\gamma) \leq t \} = \{\alpha_1,\ldots,\alpha_S\}$. First note that:
\begin{align*}
\idot{\tp_i}{\tp_j}  =&  \Idot{\sum_{\alpha\in M_t} a_{\alpha}(\hP_i) \varphi_{\alpha}}{\sum_{\alpha\in M_t} a_{\alpha}(\hP_j) \varphi_{\alpha}} \\
 =&  \sum_{\alpha\in M_t} \sum_{\beta\in M_t}a_{\alpha}(\hP_i) a_{\beta}(\hP_j)\Idot{\varphi_{\alpha}}{\varphi_{\beta}} \\
 =&  \sum_{\alpha\in M_t} a_{\alpha}(\hP_i) a_{\alpha}(\hP_j) \\
 =& \Idot{\avec_t(\hP_i)}{\avec_t(\hP_j)},
\end{align*}
where $\avec_t(\hP_i) = (a_{\alpha_1},\ldots,a_{\alpha_s})$, $M_t = \{\alpha_1,\ldots,\alpha_s\}$, and the last inner product is the vector dot product. Thus,
\begin{align*}
\Norm{\tp_i-\tp_j}_2 = \Norm{\avec_t(\hP_i)-\avec_t(\hP_j)}_2,
\end{align*}
where the norm on the LHS is the $L_2$ norm and the $\ell_2$ on the RHS. 

Consider a fixed $\sigma$. Let $\omega_i \stackrel{iid}{\sim} \calN(0,\sigma^{-2}I_s)$, $b_i \stackrel{iid}{\sim} \Unif[0,2\pi]$, be fixed. Let $K_\sigma(x) = K(x/\sigma)$. Then,
\begin{align}
\sum_{i=1}^N\theta_i K_\sigma(\norm{\tp_i-\tp_0}_2) \approx& \sum_{i=1}^N \theta_iz(\avec_t(\hP_i))^Tz(\avec_t(\hP_0)) \nonumber \\
=& \left(\sum_{i=1}^N \theta_iz(\avec_t(\hP_i))\right)^Tz(\avec_t(\hP_0)) \nonumber\\
=& \psi^T z(\avec_t(\hP_0)) \label{eq:lin_est_approx}
\end{align}
where $\psi = \sum_{i=1}^N \theta_iz(\avec_t(\hP_i)) \in \R^s$. Hence, we consider estimators of the form
\eqref{eq:lin_est_approx}; that is, we consider linear estimators in the non-linear space induced by $z(\avec_t(\cdot))$. In particular, we consider the OLS estimator using the data-set $\{(z(\avec_t(\hP_i)),Y_i)\}_{i=1}^N$ :
\begin{align}
\hat{f}(\tP_0) \equiv& \hpsi^T z(\avec_t(\hP_0))\ \where \label{eq:OLSest} \\
\hpsi \equiv& \argmin_\beta \norm{\vY-\bZ\beta}_2^2 \\
=& (\bZ^T\bZ)^{-1}\bZ^T\vY
\end{align}
for $\vY=(Y_1,\ldots,Y_N)^T$, and with $\bZ$ being the $N\times D$ matrix: $\bZ=[z(\avec_t(\hP_1))\cdots z(\avec_t(\hP_N)) ]^T$. 

\subsection{Evaluation Computational Complexity}
We see that after computing $\hpsi$, evaluating our estimator on a new distribution $P_0$ amounts to taking an inner product with a $D \times 1$ vector. Including the time required for computing $z(\avec_t(\hP_0))$, the computation required for the evaluation, $\hat{f}(\tP_0) = \hat{\psi}^Tz(\avec_t(\hP_0))$, is: one, the time for evaluating the projection coefficients $\avec_t(\hP_1)$, $O(sn)$; two, the time to compute the RKS features $z(\cdot)$, $O(Ds)$; three, the time to compute the inner product, $\langle \hat{\psi}, \cdot \rangle$, $O(D)$. Hence, the total time is $O(D+Ds+sn)$. We'll see that $D=O(n\log(n))$ and $s=O(n)$ hence the total run-time for evaluating $\hat{f}(\tP_0)$ is $O(n^2\log(n))$. Since we are considering data-sets where the number of instances $N$ far outnumbers the number of points per sample set $n$, $O(n^2\log(n))$ is a substantial improvement over $O(Nn^2)$.

\subsection{Ridge Double-Basis Estimator}
We note that a straightforward extension to the Double-Basis estimator is to use a ridge regression estimate on features $z(\avec_t(\cdot))$ rather than a OLS estimate. That is, for $\lambda\geq 0$ let
\begin{align}
\hpsi^T_\lambda \equiv& \argmin_\beta \norm{\vY-\bZ\beta}_2^2 + \lambda \norm{\beta}_2 \label{eq:ridgeest}\\
=& (\bZ^T\bZ+\lambda I)^{-1}\bZ^T\vY.
\end{align}
Clearly the Ridge Double-Basis estimator is still evaluated via a dot product with $\hpsi^T_\lambda$, and our above complexity analysis holds. Furthermore, we note that the Double-Basis estimator is a special case of the Ridge Double-Basis estimator with $\lambda =0$.

\section{Theory}
\subsection{Assumptions}
We shall assume the following:
\begin{enumerate}[label=\textbf{A.\arabic*}]
\item{ \label{asmp:sob}
{\em Sobolev Input Distributions.} Suppose that \eqref{eq:sob-ellp} holds.
}
\item{ \label{asmp:sob}
{\em RKHS Mapping.} We shall assume that $f \in \calF(\sigma,B)$ for $f: \calI \mapsto \R$, where $\sigma, B \in \R$ and
\begin{align}
\calF(\sigma,B) = \Big\{& f : f(P) = \sum_{i=1}^\infty \theta_i \Ks\left(G_i,P\right) ,  \\
&\where\ G_i \in \calI,\ \norm{\theta}_1 \leq B\Big\}.
\end{align}
Here we take $\Ks(G_i,P) = \Ks(\norm{g_i-p}_2) = K({\norm{g_i-p}_2}/{\sigma})$ to be a shift-invariant kernel. In particular, we take $K$ to be the RBF kernel: $K(x) = \exp(-x^2/2)$. Note further that:
\begin{align}
|K(x)-K(x')| \leq e^{-\frac{1}{2}}|x-x'|. \label{eq:lips}
\end{align}
}
\item{ \label{asmp:sampsize}
{\em Input Sample Set Sizes. } Suppose that $\forall i$ $|\calX_i| \asymp n$. 
}
\end{enumerate}

\subsection{Convergence Rate}
Since by \ref{asmp:sob} we have that $|f(P)|\leq B$, we consider an upperbound for the risk of a truncated version of our estimator \eqref{eq:OLSest}. Let $T_B(x) \equiv \sign(x)\min(|x|,B)$. For readability, let $Z(P)=z(\avec_t(P))$. Let a small real $\delta>0$ be fixed. We look to show that:
\begin{theorem}
\begin{align*}
&\EE{\left(T_B\left(\hat{\psi}^TZ(\hP_0)\right)-f(P_0)\right)^2} \\
&= O\left(n^{-1/(2+\gamma^{-1})}\right) + O\left(\frac{n\log(n)\log(N)}{N}\right)
\end{align*}
with probability at least $1-\delta$.
\end{theorem}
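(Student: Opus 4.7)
The plan is to decompose the squared excess risk into three conceptually separate sources and bound each term in turn. Introduce the \emph{oracle} linear predictor $\psi^{\star} \equiv \sum_i \theta_i\, z(\avec_t(G_i))$, which by $\|\theta\|_1 \leq B$ and $\|z(\cdot)\|_2 \leq \sqrt{2}$ satisfies $\|\psi^{\star}\|_2 \leq B\sqrt{2}$. By the triangle inequality, $|T_B(\hpsi^T Z(\hP_0)) - f(P_0)|$ is bounded by the sum of three terms: (A) $|T_B(\hpsi^T Z(\hP_0)) - T_B(\psi^{\star T} Z(\hP_0))|$, (B) $|\psi^{\star T} Z(\hP_0) - f(\tP_0)|$, and (C) $|f(\tP_0) - f(P_0)|$. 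Squaring and using $(a+b+c)^2 \leq 3(a^2+b^2+c^2)$ reduces the theorem to bounding three expected squared quantities: (A) the finite-sample OLS error against the oracle, (B) the RKS approximation bias of the oracle to $f(\tP_0)$, and (C) the density-estimation bias of substituting $\tP_0$ for $P_0$ in $f$.

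For (C), the Lipschitz bound \eqref{eq:lips} together with $\|\theta\|_1 \leq B$ yields $|f(\tP_0) - f(P_0)| \leq (B e^{-1/2}/\sigma)\,\|\tp_0 - p_0\|_2$; the $L_2$ rate of the projection estimator \eqref{eq:coef-est} then gives an $O(n^{-2/(2+\gamma^{-1})})$ contribution to the squared risk. For (B), I invoke the uniform random-feature approximation of \cite{rahimi2007random}: on the bounded domain in which $\avec_t(\hP)$ sits (guaranteed by Sobolev assumption \ref{asmp:sob}), with $D = O(n\log n)$ features we obtain $\sup_{x,y}|K(\|x-y\|/\sigma) - z(x)^T z(y)|$ small with probability at least $1 - \delta/3$; combined with $\|\theta\|_1 \leq B$ this bounds $|\psi^{\star T} Z(\hP_0) - f(\tP_0)|$ by $B$ times the uniform RKS error plus a small contribution from truncating each $g_i$'s projection-series expansion. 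For (A), standard least-squares theory with bounded features ($\|Z(\hP_i)\|_2 \leq \sqrt{2}$), truncated targets ($|T_B(\cdot)| \leq B$), and sub-Gaussian response noise yields excess risk $O(D\log(N)/N) = O(n\log n \log N / N)$, matching the second term.

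The main obstacle is coherently combining three sources of randomness (the input samples $\{\calX_i\}$, the RKS frequencies $\{\omega_k, b_k\}$, and the response noise $\{\epsilon_i\}$) into a single high-probability statement of level $\delta$. A secondary balancing issue is the joint choice of the projection cutoff $t$ and the random-feature dimension $D$: $t$ must give $s = |M_t| = O(n)$ so the ambient RKS input dimension stays moderate, while $D = O(n\log n)$ must be large enough that the RKS bias is competitive with the density-estimation bias yet small enough to keep the OLS variance at $O(n\log n \log N / N)$. Finally, relating $f(\tP_0)$ to $\psi^{\star T} Z(\hP_0)$ requires a careful triangle argument tying $\|g_i - p_0\|_2$ to $\|\avec_t(G_i) - \avec_t(\hP_0)\|_2$; this is handled by adding and subtracting the projection-truncated versions of $g_i$ and $p_0$ and invoking both \eqref{eq:lips} and the Sobolev decay \eqref{eq:sob-ellp}.
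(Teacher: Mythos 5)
Your three-term decomposition is a genuinely different route from the paper's, which instead (i) applies Theorem 11.3 of Gy\"orfi et al.\ to compare the truncated OLS prediction directly against the \emph{population-optimal} linear predictor $\beta = \Sigma^{+}\E[Y_0 Z(\hP_0)]$, and then (ii) bounds $\E[(f(P_0)-\beta^T Z(\hP_0))^2]$ (Lemmas 4.2 and 4.3) via the combined kitchen-sink/projection error $\varsigma_0$. Your proposal replaces the comparison point $\beta$ with the oracle $\psi^{\star}=\sum_i\theta_i z(\avec_t(G_i))$, and this is where the argument has a real gap.

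Term (A) is not a standard OLS excess-risk quantity, and the claimed $O(D\log N/N)$ rate for it cannot come ``for free.'' The OLS estimator $\hpsi$ does not target $\psi^{\star}$: writing $Y_i=\psi^{\star T}Z(\hP_i)+\varsigma_i+\epsilon_i$ (the paper's Eq.\ around \eqref{eq:realmodel}), the approximation residual $\varsigma_i$ is \emph{not} conditionally mean-zero given $Z(\hP_i)$, so $\hpsi$ converges to $\beta=\Sigma^{+}\Sigma_Y=\Sigma^{+}\Sigma\psi^{\star}+\Sigma^{+}\E[\varsigma_0 Z(\hP_0)]$, not to $\psi^{\star}$. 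Standard least-squares results bound $\E[(T_B(\hpsi^T Z)-\E[Y|Z])^2]$ relative to $\min_\beta\E[(\beta^T Z-\E[Y|Z])^2]$; they do not bound $\E[(\hpsi^T Z-\psi^{\star T}Z)^2]$. The missing piece is precisely the bias $\E[(\beta^T Z - \psi^{\star T}Z)^2]$, driven by the cross-covariance $\E[\varsigma_0 Z(\hP_0)]$ and (when $\Sigma$ is singular) by the component of $\psi^{\star}$ outside the range of $\Sigma$. In the paper's Lemma 4.2 this is exactly where the cross-term $4B\sqrt{\E[\varsigma_0^2]}$ arises, degrading the rate from $O(n^{-2/(2+\gamma^{-1})})$ to the stated $O(n^{-1/(2+\gamma^{-1})})$. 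A useful sanity check: your accounting yields a total of $O(n^{-2/(2+\gamma^{-1})})+O(n\log n\log N/N)$, strictly tighter than the theorem you are trying to prove --- a signal that a loss has been dropped. To fix (A), you must insert $\beta$ between $\hpsi$ and $\psi^{\star}$ and bound $\E[(\beta^T Z-\psi^{\star T}Z)^2]$ separately; doing so recovers the paper's $\sqrt{\E[\varsigma_0^2]}$ term and hence the correct $n^{-1/(2+\gamma^{-1})}$ rate. Terms (B) and (C), together with the failure-probability bookkeeping and the $s$--$D$ balancing, are essentially sound and line up with the paper's Lemma 4.3.
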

Roughly speaking, our proof will work as follows: first, we show that a population optimal linear model in the non-linear features $Z(\cdot)$ is close to the function $f$; then we will show that a population optimal linear model is close to the OLS (sample optimal) linear model.

Thus, we proceed to show that predictions from the optimal linear model using $Z(P_0)$ is close to $f(P_0)$, that is:
\begin{align*}
\frac{1}{2}\E_{P_0}\left[ \left(f(P_0) -\beta^TZ(\hP_0) \right)^2\right]
\end{align*}
is small, where $\beta$ is an optimal weight vector.
Note that $\beta$ minimizes:
\begin{align}
&\E\left[ \left(Y_0 -\beta^TZ(\hP_0) \right)^2\right] = \label{eq:opti_linear}\\
& \E\left[ Y_0^2\right] - 2\E\left[Y_0Z(\hP_0)\right]^T\beta +\beta^T\E\left[Z(\hP_0)Z(\hP_0)^T\right]\beta \nonumber.
\end{align}
Let
\begin{align}
\varsigma_i \equiv \sum_{j=1}^\infty \theta_j \left( \Ks\left(g_j,p_i\right) - Z(G_j)^T Z(\hP_i)\right).
\end{align}
Furthermore, note that:
\begin{align}
Y_i = f(p_i) + \epsilon_i = \sum_{j=1}^\infty \theta_j \Ks\left(g_j,p_i\right) + \epsilon_i. \label{eq:realmodel}
\end{align}
Let
\begin{align*}
\bg_i = \sum_{\alpha\in M_t }a_\alpha(G_i)\varphi_\alpha(x).
\end{align*}
Also, let $\avec_t(G_j) = (a_{\alpha_1}(G_j),\ldots,a_{\alpha_S}(G_j))^T$. When using kitchen sinks, we will see that $Y$ is approximately a linear model. Precisely, 
\begin{align*}
Y_i =& \sum_{j=1}^\infty \theta_j Z(G_j)^T Z(\hP_i) + \varsigma_i + \epsilon_i \\
=& \psi^T Z(\hP_i) + \varsigma_i + \epsilon_i,
\end{align*}
where $\psi = \sum_{i=1}^\infty \theta_i Z(G_i)$.
First we prove the following bound for the error using the optimal linear model $\beta$:
\begin{lemma}
\begin{align*}
\E_{P_0}\left[ \left(f(P_0) -\beta^TZ(\hP_0) \right)^2\right] \leq \E_{P_0}\left[ \varsigma_0^2\right]+4B\sqrt{\E_{P_0}\left[ \varsigma_0^2\right]}
\end{align*}
\end{lemma}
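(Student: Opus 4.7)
The plan is to compare the population-optimal weight vector $\beta$ with the natural surrogate $\psi = \sum_{i=1}^\infty \theta_i Z(G_i)$, exploiting that by the definition of $\varsigma_0$ given just before the lemma we have the exact identity $f(P_0) - \psi^T Z(\hP_0) = \varsigma_0$. Adding and subtracting $\psi^T Z(\hP_0)$ and expanding the square produces the three-term decomposition
\begin{align*}
&\E\bigl[(f(P_0) - \beta^T Z(\hP_0))^2\bigr] = \E[\varsigma_0^2] \\
&\quad + 2\,\E\bigl[\varsigma_0\,(\psi-\beta)^T Z(\hP_0)\bigr] + \E\bigl[((\psi-\beta)^T Z(\hP_0))^2\bigr].
\end{align*}

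The engine of the argument is the first-order optimality condition for $\beta$. The normal equations give $\E[Z(\hP_0)(Y_0 - \beta^T Z(\hP_0))] = 0$, and since $\epsilon_0$ has mean zero and is independent of $(P_0,\calX_0)$ we also have $\E[Z(\hP_0)\epsilon_0] = 0$, so
\[
\E[Z(\hP_0)(f(P_0) - \beta^T Z(\hP_0))] = 0.
\]
Dotting with $(\psi - \beta)$ and substituting $f(P_0)-\beta^T Z(\hP_0) = \varsigma_0 + (\psi-\beta)^T Z(\hP_0)$ yields the useful residual orthogonality
\[
\E\bigl[((\psi-\beta)^T Z(\hP_0))^2\bigr] = -\E\bigl[\varsigma_0\,(\psi-\beta)^T Z(\hP_0)\bigr].
\]

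Finally, I would handle the remaining cross term by Cauchy--Schwarz and use the RKHS constant $B$ to control the $L^2$ norm of $(\psi-\beta)^T Z(\hP_0)$. Because $\norm{\theta}_1 \leq B$ and each random feature satisfies $\norm{Z(\cdot)}_2 \leq \sqrt{2}$, one gets the pointwise bound $|\psi^T Z(\hP_0)| \leq 2B$; combined with the optimality estimate $\E[(\beta^T Z(\hP_0))^2] = O(B^2+\sigma_\epsilon^2)$ (from $\E[(Y_0-\beta^T Z(\hP_0))^2] \leq \E[Y_0^2]$), the triangle inequality gives $\sqrt{\E[((\psi-\beta)^T Z(\hP_0))^2]} = O(B)$. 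Plugging this into Cauchy--Schwarz, $|\E[\varsigma_0(\psi-\beta)^T Z(\hP_0)]| \leq 2B\sqrt{\E[\varsigma_0^2]}$, and inserting both this and the orthogonality relation back into the decomposition collapses the $V^2 := \E[((\psi-\beta)^T Z(\hP_0))^2]$ term against one copy of the cross term, leaving the claimed bound $\E[\varsigma_0^2] + 4B\sqrt{\E[\varsigma_0^2]}$.

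The main obstacle is that $\beta$, as an unconstrained minimizer, carries no explicit a priori norm bound, so $\E[((\psi-\beta)^T Z(\hP_0))^2]$ cannot be controlled directly from its definition. The resolution is precisely the self-referential orthogonality obtained from the normal equations: it both removes one copy of $V^2$ by the identity $V^2 = -\E[\varsigma_0 (\psi-\beta)^T Z(\hP_0)]$ and, when combined with Cauchy--Schwarz and the RKHS bound $\norm{\theta}_1 \leq B$ from assumption~\textbf{A.2}, lets what remains be bounded purely in terms of $\sqrt{\E[\varsigma_0^2]}$ and $B$.
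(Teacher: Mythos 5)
Your proposal is correct, and it takes a genuinely different and cleaner route than the paper. The paper plugs in $\beta = \Sigma^+\Sigma_Y$ and then does a purely algebraic expansion, tracking terms like $\Sigma_Y^T\Sigma^+\Sigma_Y$, using the Moore--Penrose identities $\Sigma^+\Sigma\Sigma^+ = \Sigma^+$, $\Sigma\Sigma^+\Sigma = \Sigma$, and finally controlling the cross term $\E[\varsigma_0 Z(\hP_0)^T(\psi - \Sigma^+\Sigma\psi)]$ by an eigendecomposition argument bounding $\norm{\Sigma^+\Sigma\psi}_2 \leq \norm{\psi}_2 \leq \sqrt{2}B$. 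You instead add and subtract the surrogate predictor $\psi^T Z(\hP_0)$ and invoke the normal equations (which hold for $\beta=\Sigma^+\Sigma_Y$ since $\Sigma_Y$ lies in the range of $\Sigma$) to obtain the residual orthogonality $\E[((\psi-\beta)^T Z(\hP_0))^2] = -\E[\varsigma_0(\psi-\beta)^T Z(\hP_0)]$. That is the textbook Pythagorean/bias--variance device, and it is both shorter and more transparent than the paper's bookkeeping. Two things are worth noting. First, your final arithmetic is slightly inconsistent: with $V^2 = -C$ the decomposition collapses to $\E[\varsigma_0^2] + 2C + V^2 = \E[\varsigma_0^2] + C$, and your Cauchy--Schwarz estimate $|C| \leq 2B\sqrt{\E[\varsigma_0^2]}$ then gives the constant $2B$, not the claimed $4B$ --- though since the lemma only asks for an upper bound this still suffices. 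Second, you are actually leaving something on the table: $V^2 = -C$ says $C = -V^2 \leq 0$, so $\E[(f(P_0)-\beta^T Z(\hP_0))^2] = \E[\varsigma_0^2] + C \leq \E[\varsigma_0^2]$ with no $B$-dependent slack at all. Equivalently, one can skip the orthogonality machinery entirely and just use that $\beta$ minimizes $\E[(Y_0 - \beta^T Z(\hP_0))^2]$, hence (after subtracting $\E[\epsilon_0^2]$, valid under the implicit assumption $\E[\epsilon_0 Z(\hP_0)]=0$ that both you and the paper use) $\E[(f(P_0)-\beta^T Z(\hP_0))^2] \leq \E[(f(P_0)-\psi^T Z(\hP_0))^2] = \E[\varsigma_0^2]$. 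So your route not only reproduces the paper's bound but, carried to completion, strictly improves on it; the detour through the $O(B)$ bound on $V$ is unnecessary.
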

\begin{proof}

Since \eqref{eq:opti_linear} is a quadratic function bounded below, an optimal $\beta$ may be found by satisfying stationarity (taking the gradient \eqref{eq:opti_linear} and setting to zero). We take $\beta = \Sigma^{+}\Sigma_Y$ where $\Sigma = \E[Z(\hP_0)Z(\hP_0)^T]$ is the uncentered covariance matrix, $\Sigma^{+}$ is its Moore-Penrose inverse, and $\Sigma_Y=\E[Y_0Z(\hP_0)]$ is the vector of uncentered covariances to the response\footnote{Note that if $\Sigma$ is nonsingular, $\Sigma^{+}=\Sigma^{-1}$ and $\beta$ is unique.}. 
Hence,
\begin{align*}
&\frac{1}{2}\E_{P_0}\left[ \left(f(P_0) -\beta^TZ(\hP_0) \right)^2\right] \\
&=  \frac{1}{2}\E_{P_0}\left[ \left(f(P_0) -\Sigma_Y^T\Sigma^{+}Z(\hP_0) \right)^2\right]\\
&= \frac{1}{2}\E_{P_0}\left[ \left(f(P_0) \right)^2\right] - \Sigma_Y^T\Sigma^{+}\E_{P_0}\left[ f(P_0) Z(\hP_0) \right] \\
&\quad +  \frac{1}{2}\Sigma_Y^T\Sigma^{+}\E_{P_0}\left[ Z(\hP_0)Z(\hP_0)^T\right]\Sigma^{+}\Sigma_Y \\
&= \frac{1}{2}\E_{P_0}\left[ \left(\psi^T Z(\hP_0)+\varsigma_0 \right)^2\right] \\
&\quad - \Sigma_Y^T\Sigma^{+}\E_{P_0,\epsilon_0}\left[ (f(P_0)+\epsilon_0) Z(\hP_0) \right] \\
&\quad + \frac{1}{2}\Sigma_Y^T\Sigma^{+}\Sigma\Sigma^{+}\Sigma_Y\\
&= \frac{1}{2}\psi^T\Sigma\psi + \E_{P_0}\left[ \varsigma_0z(\hP_0)^T \right]\psi+ \frac{1}{2}\E_{P_0}\left[ \varsigma_0^2\right] \\
&\quad - \frac{1}{2}\Sigma_Y^T\Sigma^{+}\Sigma_Y.
\end{align*}
Also,
\begin{align*}
\Sigma_Y &= \E_{P_0,\epsilon_0}\left[(\psi^T Z(\hP_0))Z(\hP_0)+\varsigma_0Z(\hP_0)+\epsilon_0Z(\hP_0) \right] \\
&= \Sigma\psi + \E_{P_0}\left[\varsigma_0Z(\hP_0)\right].
\end{align*}
Thus,
\begin{align*}
&\Sigma_Y^T\Sigma^{+}\Sigma_Y \\
= & (\psi^T\Sigma + \E_{P_0}\left[\varsigma_0Z(\hP_0)^T\right])\Sigma^{+}(\Sigma\psi + \E_{P_0}\left[\varsigma_0Z(P_0)\right])\\
= & \psi^T\Sigma\Sigma^{+}\Sigma\psi + \E_{P_0}\left[\varsigma_0Z(\hP_0)^T\right]\Sigma^{+}\Sigma\psi\\
&+\psi^T\Sigma\Sigma^{+}\E_{P_0}\left[\varsigma_0Z(\hP_0)\right]\\
&+\E_{P_0}\left[\varsigma_0Z(\hP_0)^T\right]\Sigma^{+}\E_{P_0}\left[\varsigma_0Z(\hP_0)\right]\\
= & \psi^T\Sigma\psi + 2\E_{P_0}\left[\varsigma_0Z(\hP_0)^T\right]\Sigma^{+}\Sigma\psi\\
&+\E_{P_0}\left[\varsigma_0Z(\hP_0)^T\right]\Sigma^{+}\E_{P_0}\left[\varsigma_0Z(\hP_0)\right].
\end{align*}
Hence,
\begin{align}
&\frac{1}{2}\E_{P_0}\left[ \left(f(P_0) -\beta^TZ(\hP_0) \right)^2\right] \nonumber\\
&= \frac{1}{2}\psi^T\Sigma\psi + \E_{P_0}\left[ \varsigma_0Z(\hP_0)^T \right]\psi+ \frac{1}{2}\E_{P_0}\left[ \varsigma_0^2\right]\nonumber\\
&\quad -\frac{1}{2}\psi^T\Sigma\psi - \E_{P_0}\left[\varsigma_0Z(\hP_0)^T\right]\Sigma^{+}\Sigma\psi\nonumber\\
&\quad-\frac{1}{2}\E_{P_0}\left[\varsigma_0Z(\hP_0)^T\right]\Sigma^{+}\E_{P_0}\left[\varsigma_0Z(\hP_0)\right]\nonumber\\
&\leq \frac{1}{2}\E_{P_0}\left[ \varsigma_0^2\right]+4B\sqrt{\E_{P_0}\left[ \varsigma_0^2\right]}\label{eq:blin-bnd},
\end{align}
see Appendix for details on the last bound.
\end{proof}

\begin{lemma}
\begin{align*}
\E_{P_0}\left[ \varsigma_0^2\right] = O\left(n^{\frac{-2}{2+\gamma^{-1}}}\right)
\end{align*}
with probability at least $1-\delta$.
\end{lemma}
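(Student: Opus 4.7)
The plan is to split $\varsigma_0$ into a density-estimation piece and a Random Kitchen Sinks piece, bound each uniformly in $j$, and aggregate using $\norm{\theta}_1 \leq B$. Inserting $\Ks(\norm{\bg_j - \tp_0}_2)$ as a pivot, I would write
\[
\Ks(g_j, p_0) - Z(G_j)^T Z(\hP_0) \;=\; A_j + B_j,
\]
with $A_j := \Ks(\norm{g_j - p_0}_2) - \Ks(\norm{\bg_j - \tp_0}_2)$ and $B_j := \Ks(\norm{\bg_j - \tp_0}_2) - Z(G_j)^T Z(\hP_0)$. The Parseval identity $\norm{\bg_j - \tp_0}_2 = \norm{\avec_t(G_j) - \avec_t(\hP_0)}_2$ (derived in the paragraphs preceding the theorem) lets me interpret $B_j$ as exactly the RKS approximation error at the $\R^s$-vectors $\avec_t(G_j),\avec_t(\hP_0)$.

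For $A_j$, the Lipschitz bound \eqref{eq:lips} combined with the reverse triangle inequality gives $|A_j| \leq e^{-1/2}\sigma^{-1}(\norm{g_j - \bg_j}_2 + \norm{p_0 - \tp_0}_2)$. The truncation piece is deterministic and uniform in $j$: because $G_j \in \calI$, Parseval with \eqref{eq:sob-ellp} yields $\norm{g_j - \bg_j}_2^2 = \sum_{\kappa_\alpha > t} a_\alpha(G_j)^2 \leq \bA/t^2$. The estimation piece is the standard projection-series density error, whose expectation is $O(n^{-2/(2+\gamma^{-1})})$ at the optimal $t \asymp n^{1/(2+\gamma^{-1})}$ by \cite{nussbaum1983optimal}. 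For $B_j$, I invoke the uniform-over-a-compact-set RKS bound of \cite{rahimi2007random}: since \eqref{eq:sob-ellp} (together with boundedness of the basis functions) confines every $\avec_t(G_j)$ and $\avec_t(\hP_0)$ to a common bounded subset of $\R^s$, with probability at least $1-\delta$ over $\{\omega_i, b_i\}$ the supremum of $|\Ks(\norm{u-v}_2) - z(u)^T z(v)|$ over that subset is bounded by some $\epsilon_{\mathrm{RKS}}$ with $\epsilon_{\mathrm{RKS}}^2 = O(s \log(s/\delta)/D)$.

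Combining via $\norm{\theta}_1 \leq B$ and $(a+b)^2 \leq 2a^2 + 2b^2$, I get
\[
\varsigma_0^2 \;\leq\; C\,B^2\bigl(\norm{p_0 - \tp_0}_2^2 + 1/t^2 + \epsilon_{\mathrm{RKS}}^2\bigr),
\]
so taking $\E_{P_0}$ delivers the lemma: each of the three terms is $O(n^{-2/(2+\gamma^{-1})})$ under the plug-in choices $t \asymp n^{1/(2+\gamma^{-1})}$ and $D = \Omega(n\log n)$ from the complexity discussion (note $2/(2+\gamma^{-1}) < 1$, so $1/D = o(n^{-2/(2+\gamma^{-1})})$ comfortably). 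The main obstacle I expect is that $\varsigma_0$ is an infinite sum over $j$, so a pointwise RKS bound does not close the argument; I must use the uniform-in-$(u,v)$ version, which is legitimate precisely because \eqref{eq:sob-ellp} bounds the coefficient vectors of every $G_j$ uniformly. A secondary point is that the bound should hold after integrating over the sampling randomness in $\hP_0$ in addition to $P_0$; this is automatic because everything above conditions only on the RKS draw $\{\omega_i,b_i\}$.
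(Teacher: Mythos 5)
Your proof follows essentially the same route as the paper: insert $\Ks(\norm{\bg_j-\tp_0}_2)$ as a pivot, use the Lipschitz constant plus the reverse triangle inequality for the density-estimation piece, invoke the uniform-over-a-compact-set Rahimi--Recht bound for the RKS piece, and aggregate via $\norm{\theta}_1 \leq B$. Your handling of the truncation term $\norm{g_j - \bg_j}_2$ is actually slightly cleaner than the paper's: since the $G_j$ are fixed elements of $\calI$, the bound $\norm{g_j-\bg_j}_2^2 = \sum_{\kappa_\alpha > t}a_\alpha(G_j)^2 \leq \bA/t^2$ is deterministic and uniform in $j$, whereas the paper somewhat loosely writes this under an expectation; your version makes the uniformity over the infinite sum transparent, which (as you correctly flag) is exactly the point that forces the use of the uniform RKS concentration over $\calM = \{v\in\R^S:\norm{v}_2^2\le\bA\}$ rather than a pointwise bound. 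One small imprecision: your stated $\epsilon_{\mathrm{RKS}}^2 = O(s\log(s/\delta)/D)$ does not match the Rahimi--Recht form exactly (the log argument involves $\xi$ and $\diam(\calM)$, so it is closer to $\log(D/\delta)$ after solving the implicit equation), and the parenthetical check ``$1/D = o(n^{-2/(2+\gamma^{-1})})$'' is not the binding constraint --- you need $S\log(\cdot)/D = O(n^{-2/(2+\gamma^{-1})})$, which does hold with $S = O(n^{\gamma^{-1}/(2+\gamma^{-1})})$ and $D\asymp n\log n$ --- but the conclusion is correct.
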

\begin{proof}
$|\varsigma_i| \leq \sum_{j=1}^\infty |\theta_j| \left| \Ks\left(g_j,p_i\right) - Z(G_j)^T z(\hP_i)\right|$
and
\begin{align*}
&\left| \Ks\left(g_j,p_i\right) - Z(G_j)^T Z(\hP_i)\right| \\
&\leq \left| \Ks\left(g_j,p_i\right) - \Ks\left(\bg_j,\tp_i\right) \right| \\
&+ \left| \Ks\left(\bg_j,\tp_i\right) - Z(G_j)^T Z(\hP_i)\right|.
\end{align*}
Also, using \eqref{eq:lips}:
\begin{align*}
&\left| \Ks\left(g_j,p_i\right) - \Ks\left(\bg_j,\tp_i\right) \right| \\
&\leq \frac{e^{-\frac{1}{2}}}{\sigma} \left|\norm{g_j - p_i}_2 - \norm{\bg_j - \tp_i}_2 \right|.
\end{align*}
Moreover, using the triangle inequality:
\begin{align*}
\left| \norm{g_j - p_i}_2 - \norm{\bg_j - \tp_i}_2 \right| \leq & \norm{g_j - \bg_j}_2 + \norm{\tp_i - p_i}_2 . 
\end{align*}
Thus,
\begin{align*}
&\EE{\left| \Ks\left(g_j,p_i\right) - \Ks\left(\bg_j,\tp_i\right) \right|}\\
&\leq  \EE{\frac{e^{-\frac{1}{2}}}{\sigma} \left(\norm{g_j - \bg_j}_2 + \norm{\tp_i - p_i}_2 \right)} \\
&= O\left(n^{\frac{1}{2+\gamma^{-1}}}\right),
\end{align*}
where the last line follows\footnote{\label{note1}See Appendix for details.} by choosing $t\asymp n^{\frac{1}{2+\gamma^{-1}}}$, and the expectation is w.r.t. $\calX_i\sim P_i$, $P_i\sim\Phi$. 

Also, note that the dimensionality of $\avec_t(G_i)$ and $\avec_t(\hP_i)$ is\footnotemark[\value{footnote}] $S=|M(t)|=O(n^{{\gamma^{-1}}/{(2+\gamma^{-1}})})$. Let $\calM = \{ v\in \R^S \ : \ \norm{v}_2^2\leq\bA\}$. Then, $\avec_t(G_j),\ \avec_t(\hP_i)\in \calM$. Hence, by {\em Claim 1} in \cite{rahimi2007random}: 
\begin{align*}
&\bP\left[ \sup_{u,v\in\calM}|K(u,v)-z(u)^Tz(v)|\geq \xi \right] \\
&\leq 2^8\left(\frac{\sqrt{S}\diam(\calM)}{\sigma \xi} \right)^2 \exp\left(- \frac{D \xi^2}{4(S+2)} \right).
\end{align*}
Thus, with probability at least $1-\delta$:
\begin{align*}
\sup_{u,v\in\calM}|K(u,v)-z(u)^Tz(v)|< n^{-\frac{1}{2+\gamma^{-1}}},
\end{align*}
if we choose $D$ such that:
\begin{align*}
D &= \Omega\left( 4(S+4)n^{\frac{2}{2+\gamma^{-1}}}\log\left( \delta^{-1} 2^{10} \bA S n^{\frac{2}{2+\gamma^{-1}}}/\sigma^2 \right) \right),
\end{align*}
which is satisfied setting $D \asymp n\log(n)$.

Hence, probability at least $1-\delta$:
\begin{align*}
&\frac{1}{2}\E_{P_0}\left[ \varsigma_0^2\right]\\
&\leq \E_{P_0}\Bigg[ \Bigg(\sum_{j=1}^\infty |\theta_j| \Big(\tfrac{e^{-\frac{1}{2}}}{\sigma} \norm{g_j - \bg_j}_2 + \tfrac{e^{-\frac{1}{2}}}{\sigma}\norm{\tp_0 - p_0}_2 \\
&\quad\quad\quad\quad +\left| \Ks\left(\bg_j,\tp_0\Big)- Z(G_j)^T Z(\hP_0)\right|  \right)  \Bigg)^2 \Bigg]\\
&= \E_{P_0}\left[ \left(\sum_{j=1}^\infty |\theta_j| \left(\norm{\tp_0 - p_0}_2 + O\left(n^{\frac{1}{2+\gamma^{-1}}}\right) \right)  \right)^2 \right] \\
&=  \left(\sum_{j=1}^\infty |\theta_j|  \right)^2\E_{P_0}\left[  \left(\norm{\tp_0 - p_0}_2 + O\left(n^{\frac{1}{2+\gamma^{-1}}}\right) \right)^2 \right] \\
&=  O\left(n^{\frac{-2}{2+\gamma^{-1}}}\right).
\end{align*}
\end{proof}
Thus, we see that $f(P)$ is close to the linear model in the non-linear spaced induced by the $O(n\log(n))$ features $Z(\cdot)$:

Then, {\em Theorem 11.3 of} \cite{gyorfi2002distribution} states that the estimated linear predictor $\hat{\beta}\in\R^d$ has an error to the mean conditional response (when truncated) relative an optimal linear predictor $\beta$ as follows:
\begin{align}
&\EE{\left(T_B\left(\hat{\beta}^Tx\right)-\EE{y|x}\right)^2} \leq \nonumber\\
&8 \EE{\left(\beta^Tx-\EE{y|x}\right)^2}+ O(\max\{\sigma_\epsilon^2,B^2\}d \log(N)/N) \label{eq:claim-rate}.
\end{align}
Using our notation we have that:
\begin{align}
&\EE{\left(T_B\left(\hat{\psi}^TZ(\hP_0)\right)-f(P_0)\right)^2} \nonumber\\
&= O\left(n^{-1/(2+\gamma^{-1})}\right) + O\left(\frac{n\log(n)\log(N)}{N}\right) \label{eq:rate},
\end{align}
where we have bounded $\EE{\left(T_B\left(\hat{\beta}^Tx\right)-\EE{y|x}\right)^2}$ using Lemmas 4.2 and 4.3, giving us our desired rate.

\section{Experiments}
We perform experiments that demonstrate the ability of the Double-Basis estimator to learn distribution-to-real mappings from large training datasets, which can be applied to yield fast, accurate, and useful predictions. We illustrate this on a few statistical estimation tasks, which aim to take a set of samples from a distribution as input and yield some estimated quantity as output. For many such tasks, we can generate large amounts of relevant output quantities and associated input samples synthetically, and can train the Double-Basis estimator on these big datasets, giving us an automated procedure to learn a mapping for these statistical estimation tasks. We will show that, in some cases, this mapping can be more accurate, faster, and more robust than existing statistical procedures.

In all of the following experiments, we train on data of the form $\calD = \{(\calX_i,Y_i)\}_{i=1}^N$.

\subsection{Synthetic Mapping}
First, we look to emphasize the computational improvement in evaluation time of the Double-Basis estimator over the Kernel-Kernel estimator using experiments with synthetic data. Our experiments are as follows. We first set $N\in\{1\Ep4,1\Ep5,1\Ep6\}$. Then, we generate a random mapping $f$ such that $f(P) = \sum_{i=1}^{10}\theta_i\Ks(G_i,P)$. We took $\sigma=1$,  $\theta_i \sim \Unif[-5,5]$, and $G_i$ to be the pdf of a mixture of two truncated Gaussians (each with weight $.5$) on the interval $[0,1]$, whose mean locations are chosen uniformly at random in $[0,1]$, and whose variance parameters are taken uniformly at random in $[.05,.1]$. For $j=\{1,\ldots,N\}$ we also set $P_j$ to be a randomly generated mixture of two truncated Gaussians as previously described. We then generate $Y_i$ under the the noiseless case, i.e $Y_i = f(P_i)$ (kernel values were computed numerically). Then, we generated $\calX_i=\{X_{i1},\ldots,X_{in}\}$ where $n\propto N^{3/5}$ and $X_{i1}\iid P_i$. $\tP_i$ was then estimated using the samples $\calX_i$.

We compared the performance of both the Double-Basis (BB), and the Kernel-Kernel (KK) estimator on a separate test set of $\calD_t = \{(\calX_j,Y_j)\}_{j=1}^{N_t}$ where $N_t = 1\Ep5$, that was generated as $\calD$ was. We measured performance in terms of mean squared error (MSE) and mean evaluation time per new query $\calX_0$ (Figures \ref{fig:big_mse} and \ref{fig:big_time} respectively). One can see that in this case both estimators have similar MSEs, with the BB estimator doing somewhat better in each configuration of the data-set size. 
However, one can observe a striking difference in the average time to evaluate a new estimate $\hat{f}(\tP)$. Figure \ref{fig:big_time} is presented in a log scale, and illustrates the Kernel-Kernel estimator's lack of scaling on data-set size, $N$. On the other hand, the Double-Basis estimator is considerbly efficient even at large $N$ and has a speed-up of about $\times12$, $\times67$, and $\times139$ over the Kernel-Kernel estimate for $N=1\Ep4,1\Ep5,1\Ep6$ respectively.

\begin{figure}
        \centering
        \subfigure[Estimation Error]{\raisebox{0mm}{\label{fig:big_mse}\includegraphics[width=.2325\textwidth]{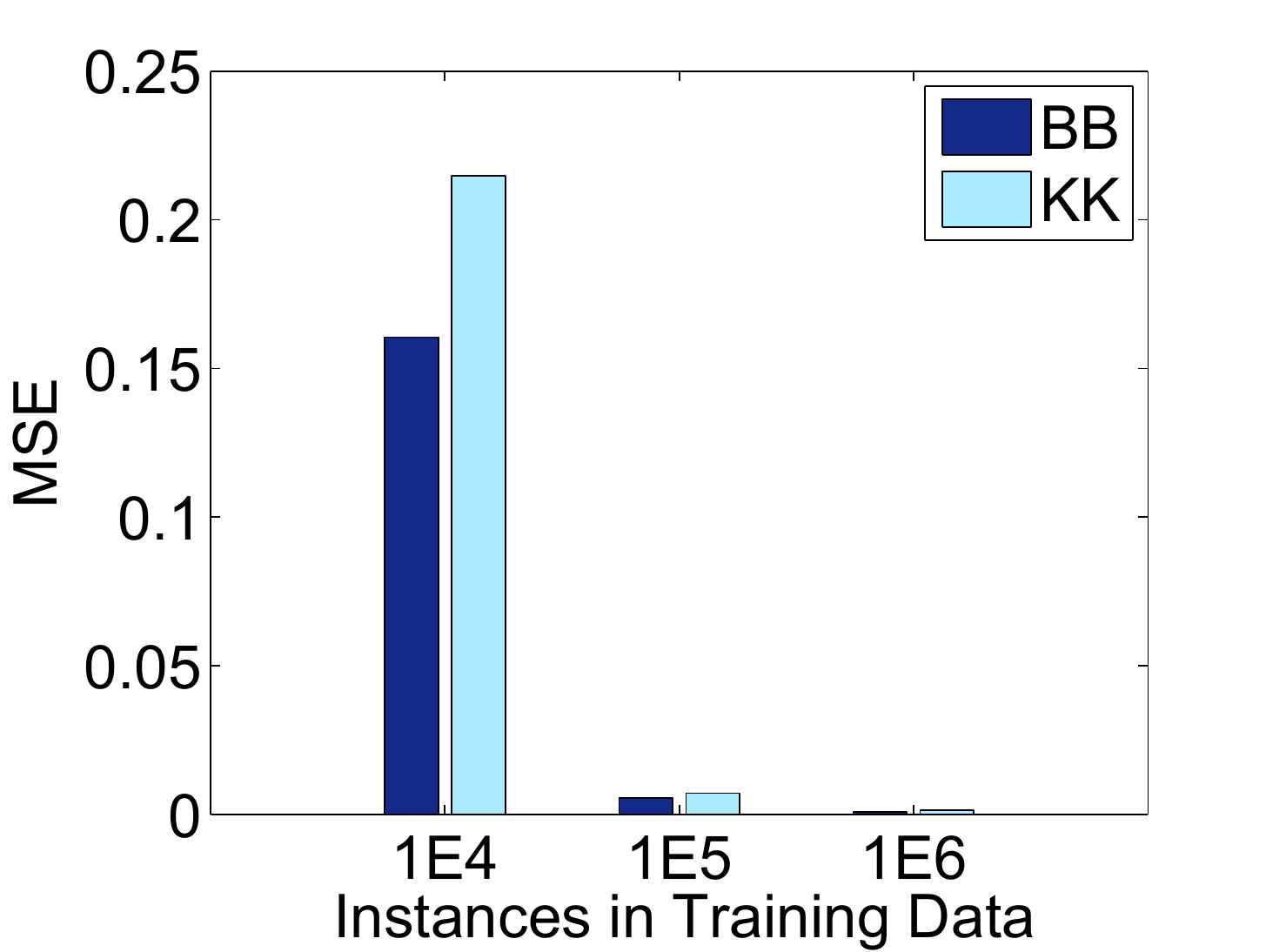}}}
        \subfigure[Estimation Time]{\raisebox{0mm}{\label{fig:big_time}\includegraphics[width=.2325\textwidth]{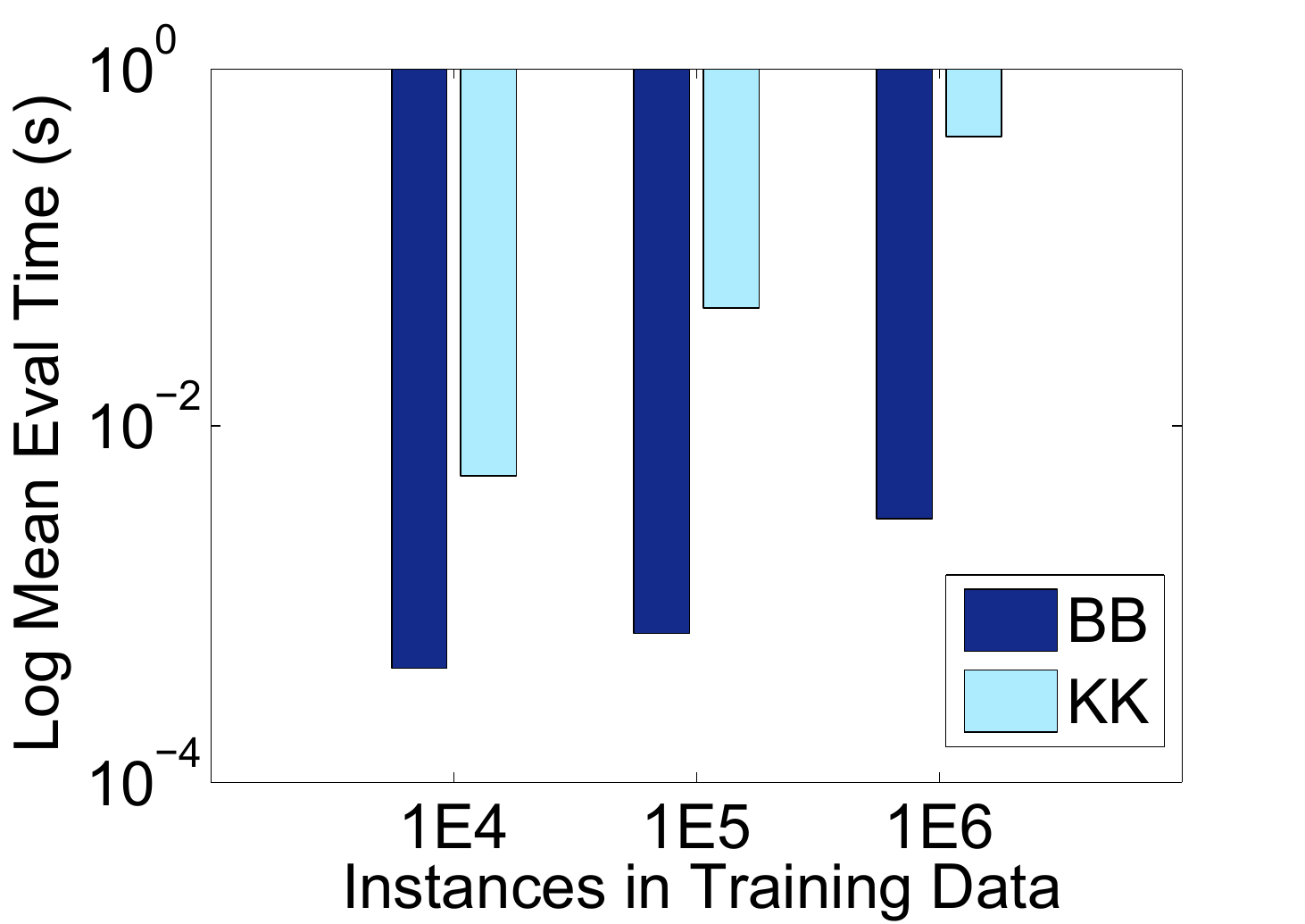}}}
        \vspace{-0.3cm}
        \caption{Results on predicting synthetic mapping $f$. }
\vspace{-0.4cm}
\end{figure}
\vspace{-0.2cm}
\subsection{Choosing $k$: model selection for Gaussian mixtures}
\vspace{-0.2cm}
Many common statistical tasks involve producing a mapping from a distribution to a real value, and may be tackled using DRR. One such task is that of model selection, where one is given a set $\calX_0 = \{X_{01},\ldots,X_{0n_0}\}$ drawn from an unknown distribution $P$ and wants to find some parameter that is indicative of the complexity of the true distribution. In other words, the mapping of interest takes in a distribution and outputs a hyperparameter of the distribution that is often illustrative of the distribution's complexity. 

In particular, we shall consider the model selection problem of selecting $k$, the number of components in a Gaussian mixture model (GMM). GMMs are often used in modeling data, however the selection of how many components to use is often a difficult choice. Attempting an MLE fit to training data will lead to choosing $k=n_0$ with one mixture component corresponding to each data-point. Hence, in order to effectively select $k$, one must fit a GMM for each potential choice of $k$ using an algorithm such as the expectation maximization algorithm (EM)~\cite{moon1996expectation}, then select the choice of $k$ that optimizes some score. In practice this often becomes computationally expensive. Typically scores used include Akaike information criterion (AIC), Bayesian information criterion (BIC), or a cross-validated data-fitting score on a holdout set (CV). We note that often GMMs are used to cluster data, where each data-point $X_{0i}$ is a assigned to a cluster based on which mixture component most likely generated it. Hence, the problem of selecting the number of mixture components in a GMM is closely related to the problem of selecting the number of clusters to use, which is itself a difficult problem.

Since selecting $k$ in GMMs is a DRR problem, and it is a relatively smooth mapping (that is, similar distributions should have a similar number of components), we hypothesize that one may learn to perform model selection in GMMs using the Double-Basis estimator. Particularly, by using a supervised dataset of $\{$sample-set, $k \}$ pairs, the Double-Basis estimator will be able to leverage previously seen data to perform model selection for a new unseen input sample set. 


Our experiment proceeds as follows. We can generate our own training data for this task by randomly drawing a value for $k$ (over some bounded range), then drawing 2-dimensional Gaussian mixture parameters for each of the $k$ components\footnote{See Appendix for figures of typical GMMs.}, and finally drawing samples from each Gaussian. That is, we generate $N=28,000$ input sample set/$k$ response pairs: $\calD = \{(\calX_i,k_i)\}_{i=1}^N$,  where $\calX_i = \{X_{i1},\ldots,X_{in}\}$, $X_{ij}\in\R^2$, $X_{ij}\iid\GMM(k_i)$, $k_i\sim\Unif\{1,\ldots,10\}$, and $\GMM(k_i)$ is a random GMM generated as follows, for $j=1,\ldots,k_i$: the prior weights for each component is taken to be $\pi_j=1/k_i$; the means are $\mu_j\sim\Unif[-5,5]^2$; and covariances are $\Sigma_j = a^2AA^T+B$, where $a\sim\Unif[1,2]$ $A_{uv}\sim\Unif[-1,1]$, and $B$ is a diagonal $2\times2$ matrix with $B_{uu}\sim\Unif[0,1]$. We train and get results using $n$ in the following range: $n \in {10,25,50,200,500,1000}$. We perform model selection using the mapping learned by the Ridge Double-Basis estimator \eqref{eq:ridgeest} (denoted BB in experiments), and compare it with model selection via AIC, BIC, and CV. We also compare agasint the Kernel-Kernel (KK) smoother. For all methods we computed the mean squared error between the true and predicted value for $k$ over 2000 test sample sets (Figure~\ref{fig:gmm_results}). We see that the Double-Basis estimator has both the lowest MSE and the lowest average evaluation time for computing a new prediction. In fact, the Double-Basis estimator can carry out the model selection prediction orders of magnitude faster than the CV, AIC, or BIC procedures.

\begin{figure}
        \centering
        \subfigure[Estimation Error]{\raisebox{0mm}{\includegraphics[width=.235\textwidth]{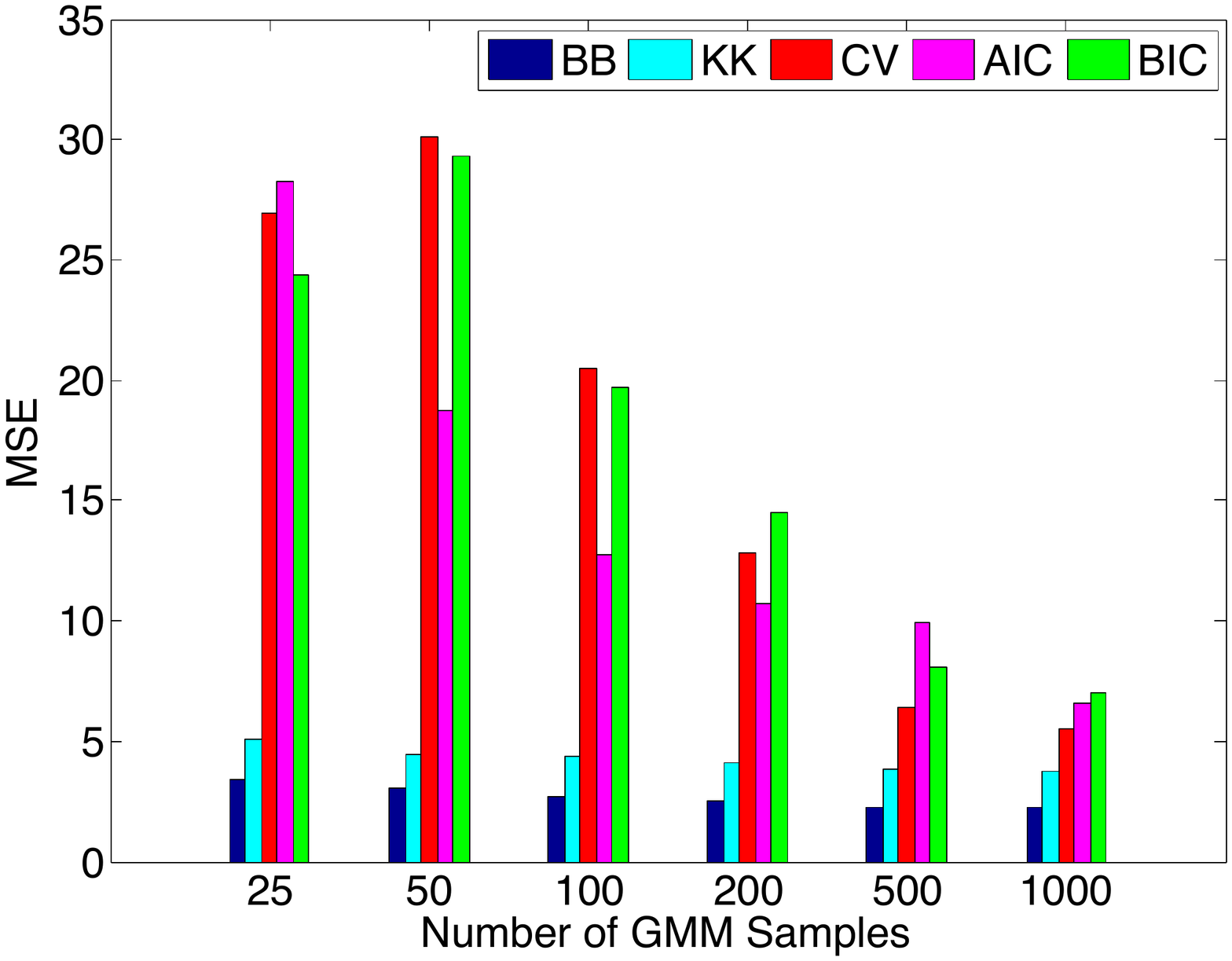}}}
        \subfigure[Estimation Time]{\raisebox{1mm}{\includegraphics[width=.23\textwidth]{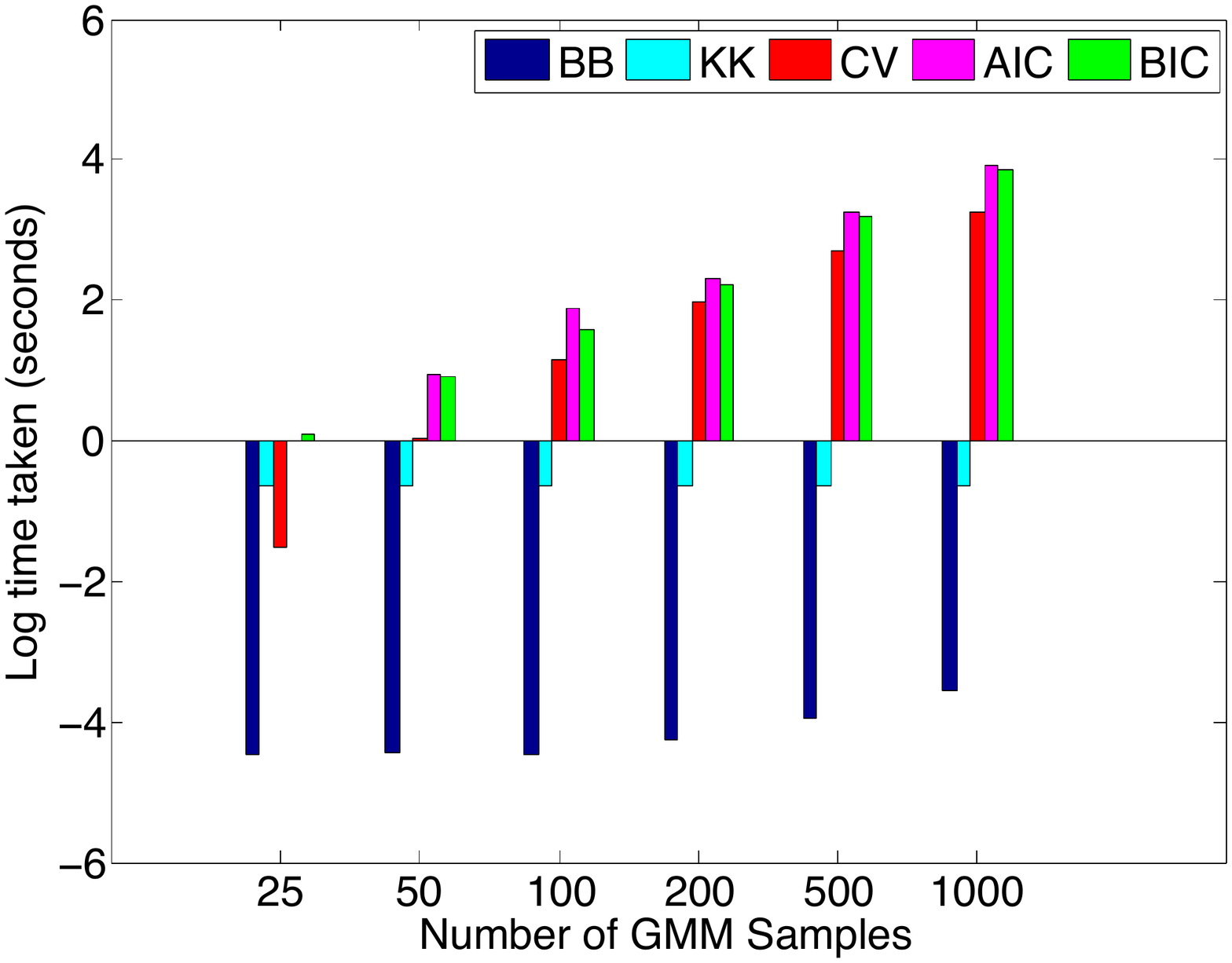}}}
        \vspace{-0.3cm}
        \caption{Results on predicting the number of GMM components. }
        \label{fig:gmm_results}
\vspace{-0.5cm}
\end{figure}


\vspace{-0.2cm}
\subsection{Low Sample Dirichlet Parameter Estimation}
\vspace{-0.2cm}
Similar to model selection, general parameter point estimation is a statistical task that may be posed as a DRR problem. That is, in parameter estimation one considers a set $\calX_0 = \{X_{01},\ldots,X_{0n_0}\}$ where points are drawn from some distribution $P(\eta_0)$ that is parameterized by $\eta_0$, and attempts to estimate $\eta_0$. In particular, we use DRR and the Double-Basis estimator to perform parameter estimation for Dirichlet distributions. The Dirichlet distribution is a family of continuous, multivariate distributions parameterized by a vector $\alpha \in \mathbb{R}_+^d$, with support over the $d$-simplex. Since every element of the support sums to one, the Dirichlet is often used to model distributions over proportion data. As before, we hypothesize that the Double-Basis estimator will serve as a way to leverage previously seen sample sets to help perform parameter estimation for new unseen sets. Effectively, our estimator will be able to ``boost'' the sample-size of a new input sample set by making use of what was learned on previously seen labeled sample sets.

Maximum likelihood parameter estimation for $\alpha$, given a set of Dirichlet
samples, is often performed via iterative optimization algorithms, such as 
gradient ascent or Newton's method \cite{minka2000estimating}, as a 
closed form solution for the 
MLE does not appear to exist in the literature. In this experiment, we aim to use 
DDR as a new method for Dirichlet parameter estimation. In particular, we generate samples from 
Dirichlet distributions with parameter values in a prespecified range, and use these as training 
data to learn a mapping from data samples to Dirichlet $\alpha$ parameter values.

In our experiments, we first fix
the range of $\alpha$ values to be constrained such that the $i^{th}$ component $\alpha_i \in [0.1,10]$. 
For each $28,000$ training instances, we uniformly sample a new $\alpha$ parameter vector within this range, 
and then generate $n$ points from the associated Dirichlet$(\alpha)$ distribution, where
$n \in \{10,25,50,200,500,1000\}$. We compare the Ridge Double-Basis estimator \eqref{eq:ridgeest} against a Newtons-method procedure for maximum likelihood 
estimation (MLE) from the fastfit toolbox \cite{minka2006fastfit}, and again against the Kernel-Kernel smoother. 
For all methods, for each $n$, we compute the mean squared error between the true and the estimated $\alpha$ parameter. 
We also record the time taken to perform the parameter estimation in each case. Results are shown in 
Figure~\ref{fig:dir_results}. We see that the Double-Basis estimator achieves the lowest MSE in 
all cases, and has the lowest average compute time. It is worth noting that the Double-Basis estimator 
performs particularly well relative to the MLE in cases where the sample size is low. We envision that 
Double-Basis estimator is particularly well suited for cases where one hopes to quickly, and in an 
automatic fashion, construct an estimator that can achieve highly accurate results for a 
statistical estimation problem for which an
optimal estimator might be hard to derive analytically.


\begin{figure}
        \centering
        \subfigure[Estimation Error]{\raisebox{0mm}{\includegraphics[width=.23\textwidth]{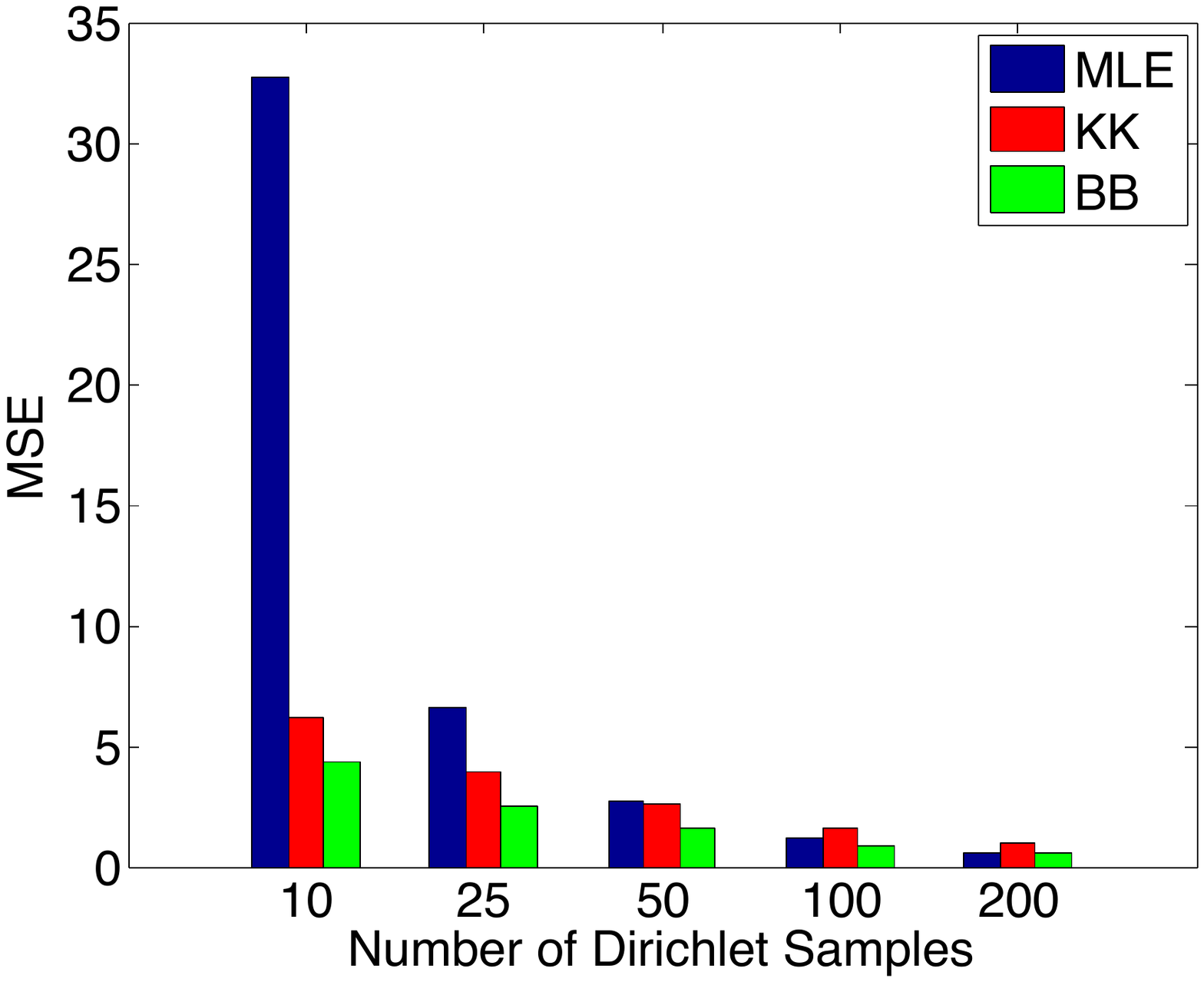}}}
        \subfigure[Estimation Time]{\includegraphics[width=.235\textwidth]{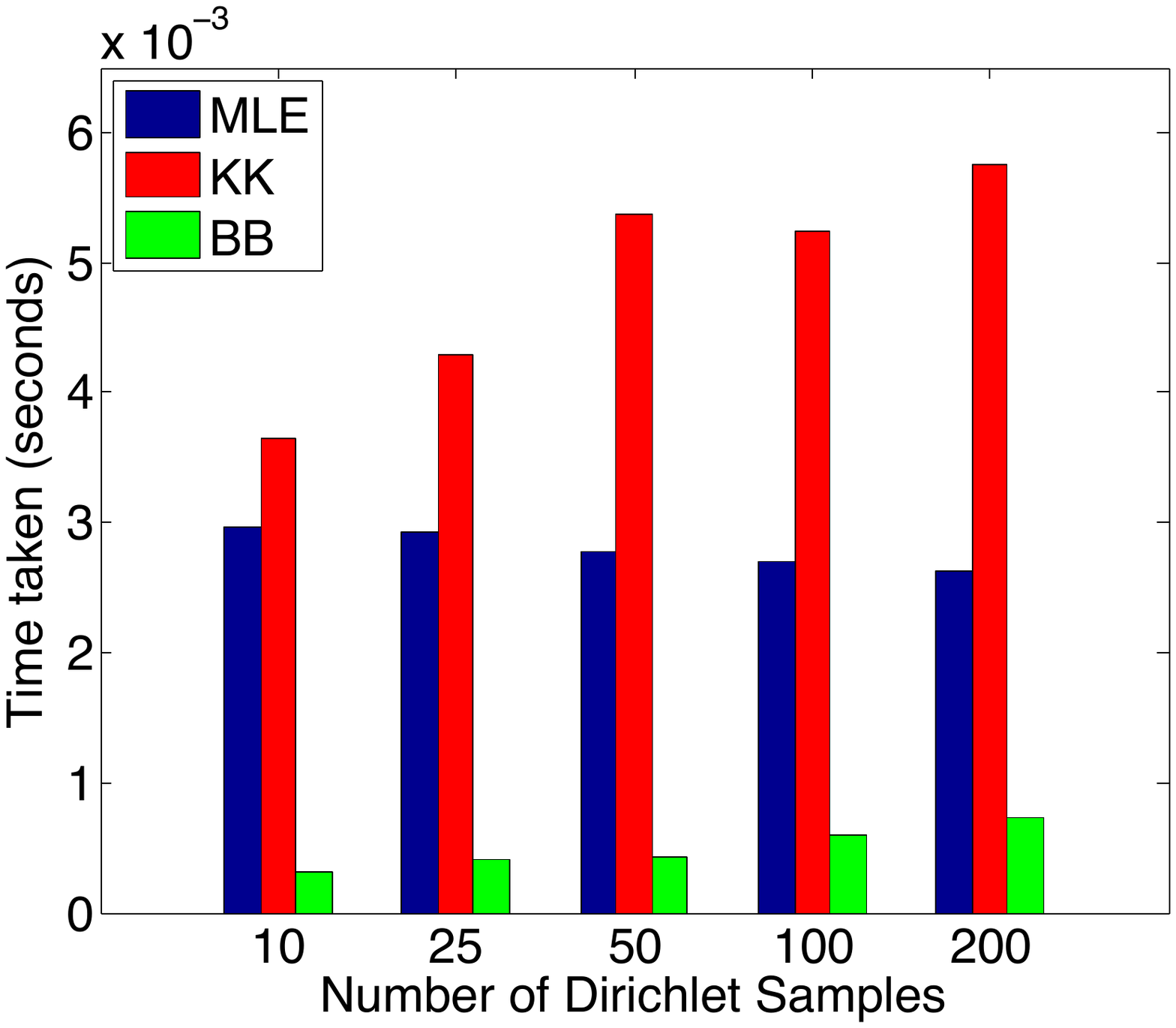}}
        \vspace{-0.3cm}
        \caption{Results predicting Dirichlet parameters. }
        \label{fig:dir_results}
\vspace{-0.5cm}
\end{figure}

\vspace{-0.2cm}
\section{Conclusion}
\vspace{-.2cm}
In conclusion, this paper presents a new estimator, the Double-Basis (BB) estimator, for performing distribution to real regression. In particular, this estimator scales independently of $N$ (the number input sample-set/response pairs) in a large dataset for performing evaluations for response predictions. This is a great improvement over the linear scaling with $N$ that the Kernel-Kernel (KK) estimator has and allows one to explore DRR in new domains with large collections of distributions, such as astronomy and finance. Furthermore, we prove an efficient upper bound on the risk for the BB estimator. Also, we empirically showed the improved scaling of the Double-Basis estimator, as well improvements in risk over the KK estimator. It is worth noting that while the BB estimator regresses a mapping in a nonlinear space (induced by RKS features), the KK estimator is outputs only a weighted average of training set responses.
\vspace{-.2cm}
\subsubsection*{Acknowledgements}
\vspace{-.2cm}
This work is supported in part by NSF grants IIS1247658 and IIS1250350.

\clearpage
{\small \bibliographystyle{amsplain}
        \bibliography{main}}
        
\clearpage
\subsection*{Appendix}

\subsubsection*{Details on Bound \eqref{eq:blin-bnd}}
\begin{align*}
&\frac{1}{2}\E_{P_0}\left[ \left(f(P_0) -\beta^TZ(\hP_0) \right)^2\right] \nonumber\\
&= \frac{1}{2}\psi^T\Sigma\psi + \E_{P_0}\left[ \varsigma_0Z(\hP_0)^T \right]\psi+ \frac{1}{2}\E_{P_0}\left[ \varsigma_0^2\right]\nonumber\\
&\quad -\frac{1}{2}\psi^T\Sigma\psi - \E_{P_0}\left[\varsigma_0Z(\hP_0)^T\right]\Sigma^{+}\Sigma\psi\nonumber\\
&\quad-\frac{1}{2}\E_{P_0}\left[\varsigma_0Z(\hP_0)^T\right]\Sigma^{+}\E_{P_0}\left[\varsigma_0Z(\hP_0)\right]\nonumber\\
&\leq \frac{1}{2}\E_{P_0}\left[ \varsigma_0^2\right]+ \E_{P_0}\left[\varsigma_0Z(\hP_0)^T\left(\psi-\Sigma^{+}\Sigma\psi\right)\right],
\end{align*}
since $\Sigma^{+}$ is PSD. Let $\Sigma=USU^{-1}$ be the eigen-decomposition of $\Sigma$; i.e. $S$ is diagonal matrix of decreasing eigenvalues and $U$ is a real unitary matrix and $U^{-1}=U^T$. Then, $\Sigma^{+}=US^+U^{-1}$, where $S^+$ is the diagonal matrix where $(S^+)_{ii}=1/(S)_{ii}$ if $(S)_{ii}\neq0$ and $(S^+)_{ii}=0$ if $(S)_{ii}=0$. Furthermore, let $r=\mathrm{rank}(\Sigma)$, and $I_r$ be the diagonal matrix with $(I_r)_{ii}=1$ for $i\leq r$ and $(I_r)_{ii}=0$ for $i> r$. Hence:
\begin{align*}
\norm{\Sigma^{+}\Sigma\psi}_2^2 &= \psi^T\Sigma\Sigma^{+}\Sigma^{+}\Sigma\psi \\
&= \psi^TUSU^{-1}US^+U^{-1}US^+U^{-1}USU^{-1}\psi \\
&= \psi^TUI_rU^{-1}\psi \\
&\leq \psi^TUIU^{-1}\psi \\
&\leq \norm{\psi}_2^2.
\end{align*}
Furthermore,
\begin{align*}
\norm{\psi}_2 \leq \sum_{i=1}^\infty |\theta_i| \norm{Z(G_i)}_2 \leq \sqrt{2} B.
\end{align*}
Hence,
\begin{align*}
&\frac{1}{2}\E_{P_0}\left[ \left(f(P_0) -\beta^TZ(\hP_0) \right)^2\right] \nonumber\\
&\leq \frac{1}{2}\E_{P_0}\left[ \varsigma_0^2\right]+ \E_{P_0}\left[\varsigma_0Z(\hP_0)^T\left(\psi-\Sigma^{+}\Sigma\psi\right)\right]\\
&\leq \frac{1}{2}\E_{P_0}\left[ \varsigma_0^2\right]+ \E_{P_0}\left[|\varsigma_0||Z(\hP_0)^T\left(\psi-\Sigma^{+}\Sigma\psi\right)|\right]\\
&\leq \frac{1}{2}\E_{P_0}\left[ \varsigma_0^2\right]+ \E_{P_0}\left[|\varsigma_0|\norm{Z(\hP_0)}_2\norm{\psi-\Sigma^{+}\Sigma\psi}_2\right]\\
&\leq \frac{1}{2}\E_{P_0}\left[ \varsigma_0^2\right]+ \sqrt{2}(\norm{\psi}_2+\norm{\Sigma^{+}\Sigma\psi}_2)\E_{P_0}\left[|\varsigma_0|\right]\\
&\leq \frac{1}{2}\E_{P_0}\left[ \varsigma_0^2\right]+ \sqrt{2}(\norm{\psi}_2+\norm{\psi}_2)\E_{P_0}\left[|\varsigma_0|\right]\\
&\leq \frac{1}{2}\E_{P_0}\left[ \varsigma_0^2\right]+ \sqrt{2}(2\sqrt{2} B)\E_{P_0}\left[|\varsigma_0|\right]\\
&\leq \frac{1}{2}\E_{P_0}\left[ \varsigma_0^2\right]+ 4B\sqrt{\E_{P_0}\left[\varsigma_0^2\right]},
\end{align*}
where the last line follows from Jensen's inequality.
\subsubsection*{GMM Figures}
See Figure \ref{fig:typ-GMM}.
\begin{figure}[h!]
        \centering
        \subfigure[$k=4$ True Density]{\raisebox{0mm}{\includegraphics[width=.235\textwidth]{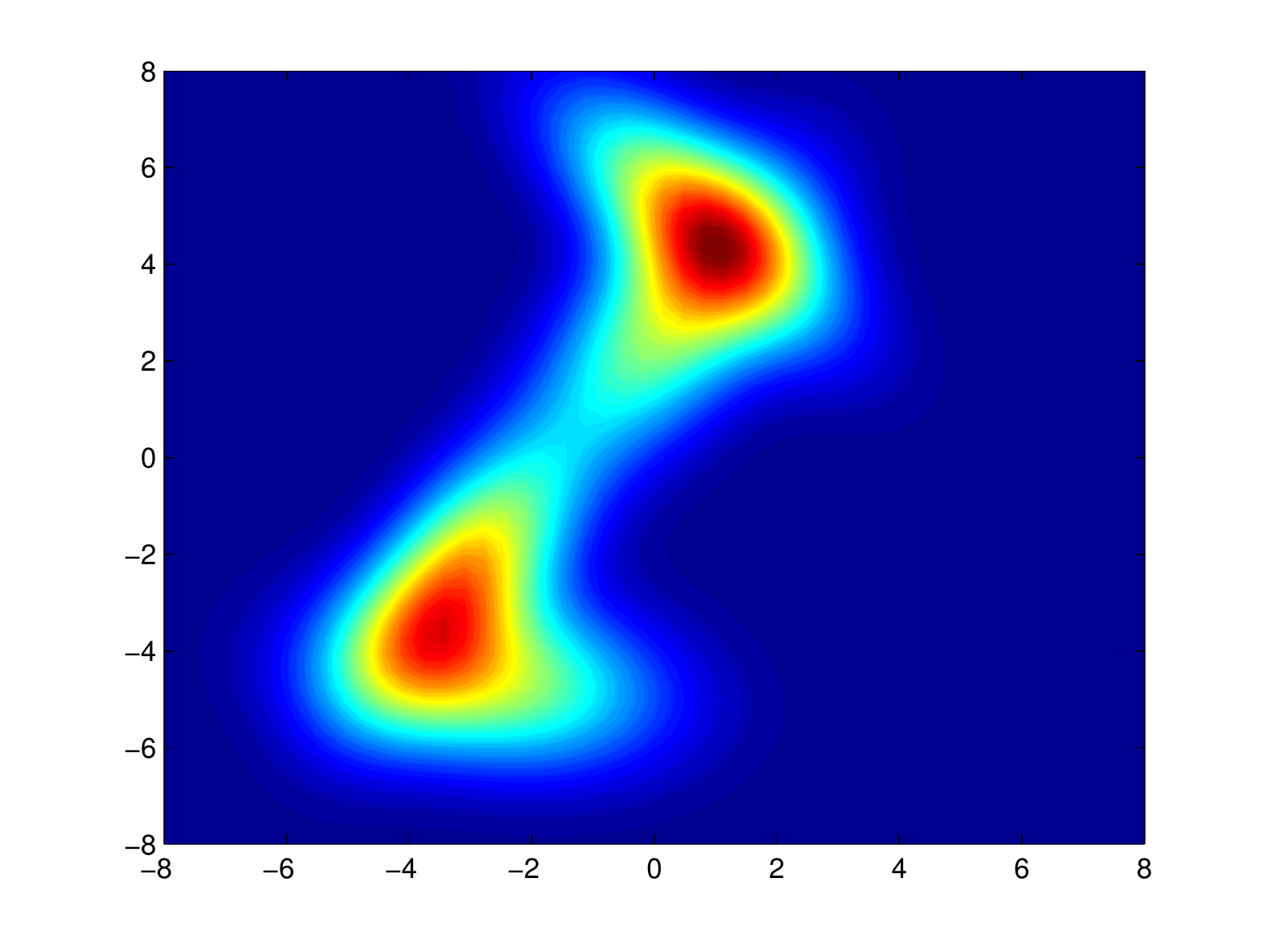}}}
        \subfigure[$k=4$, $n=100$ Points]{\raisebox{0mm}{\includegraphics[width=.235\textwidth]{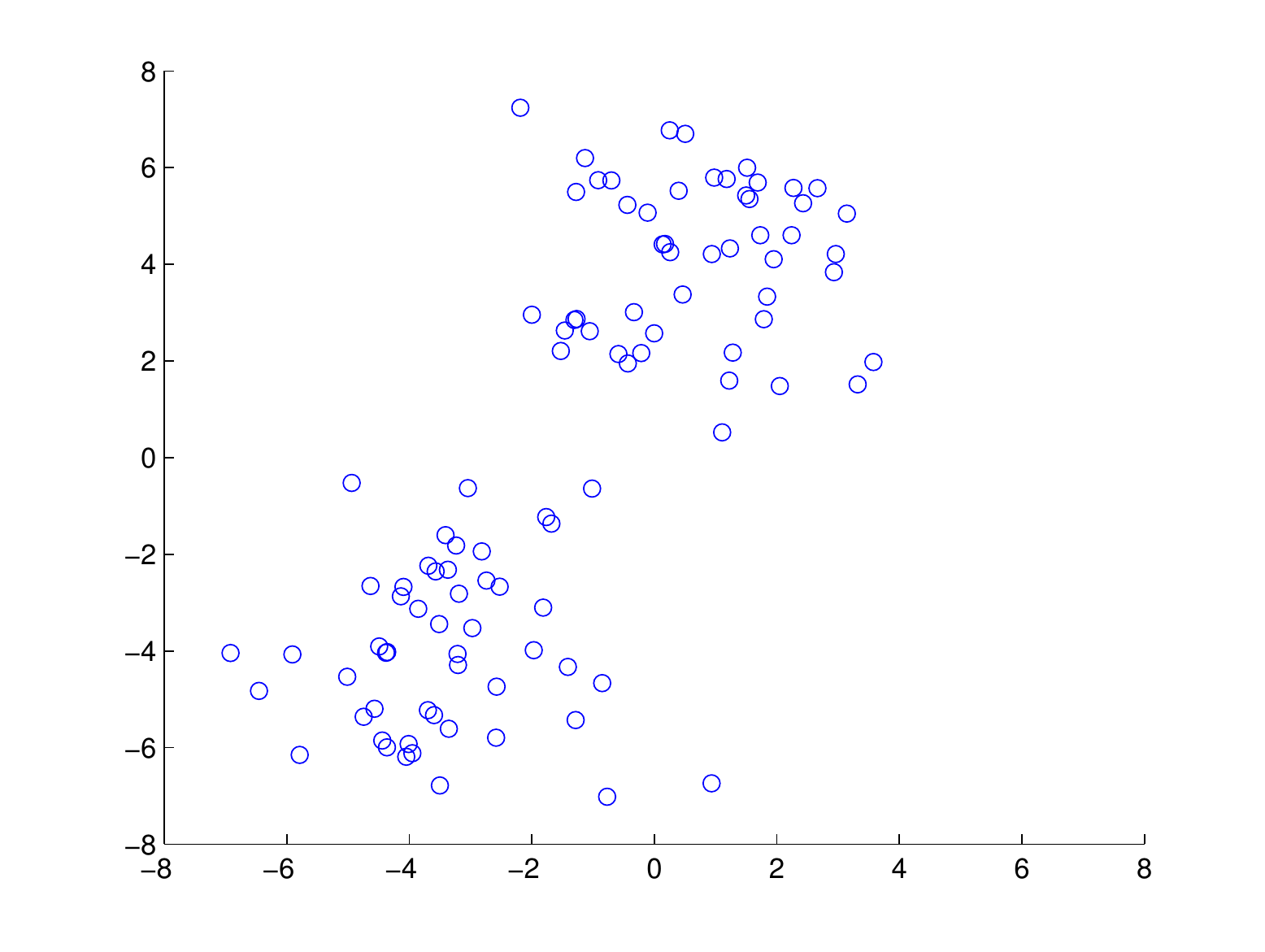}}}
        \subfigure[$k=5$ True Density]{\raisebox{1mm}{\includegraphics[width=.23\textwidth]{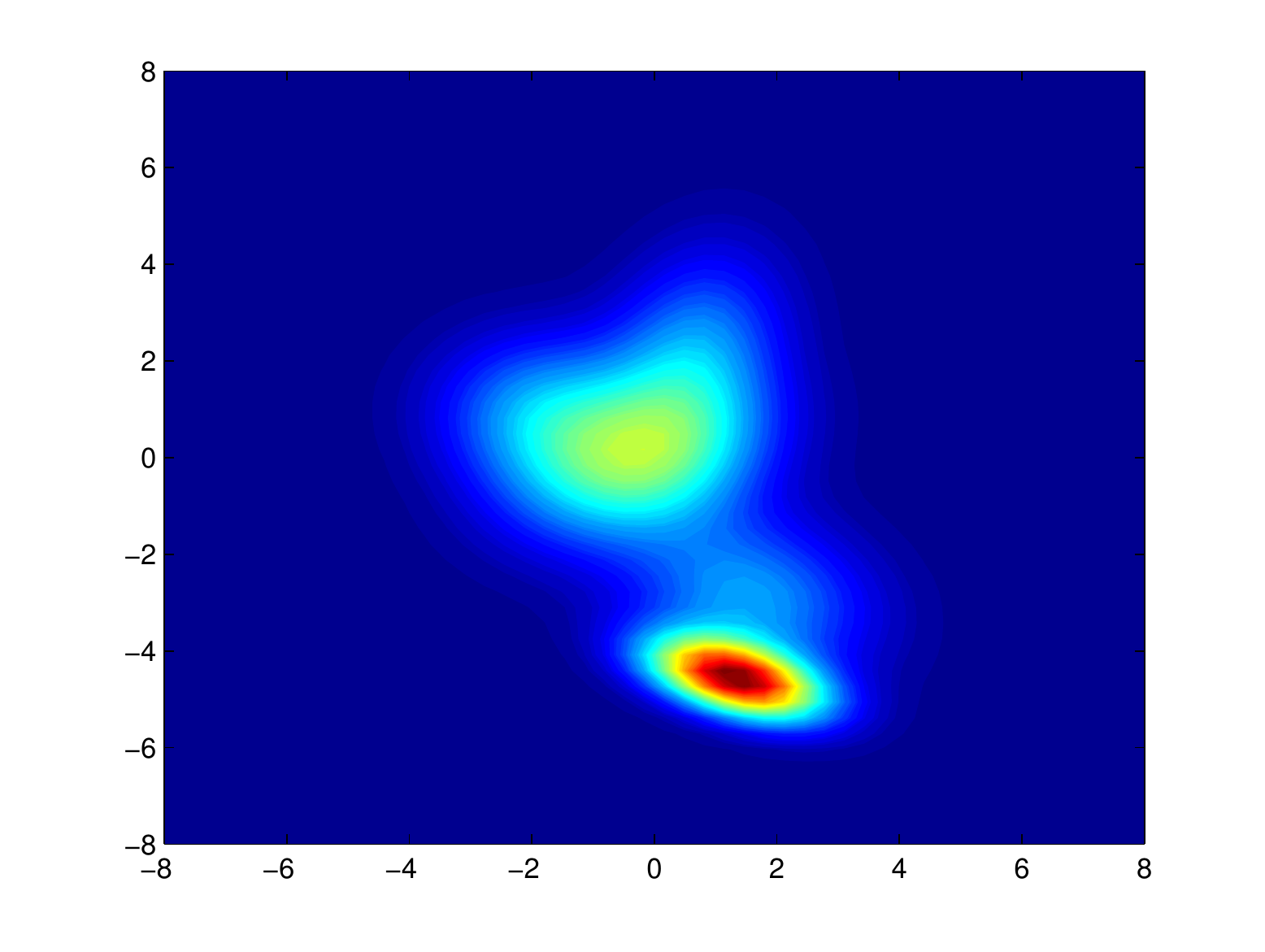}}}
        \subfigure[$k=5$, $n=100$ Points]{\raisebox{0mm}{\includegraphics[width=.235\textwidth]{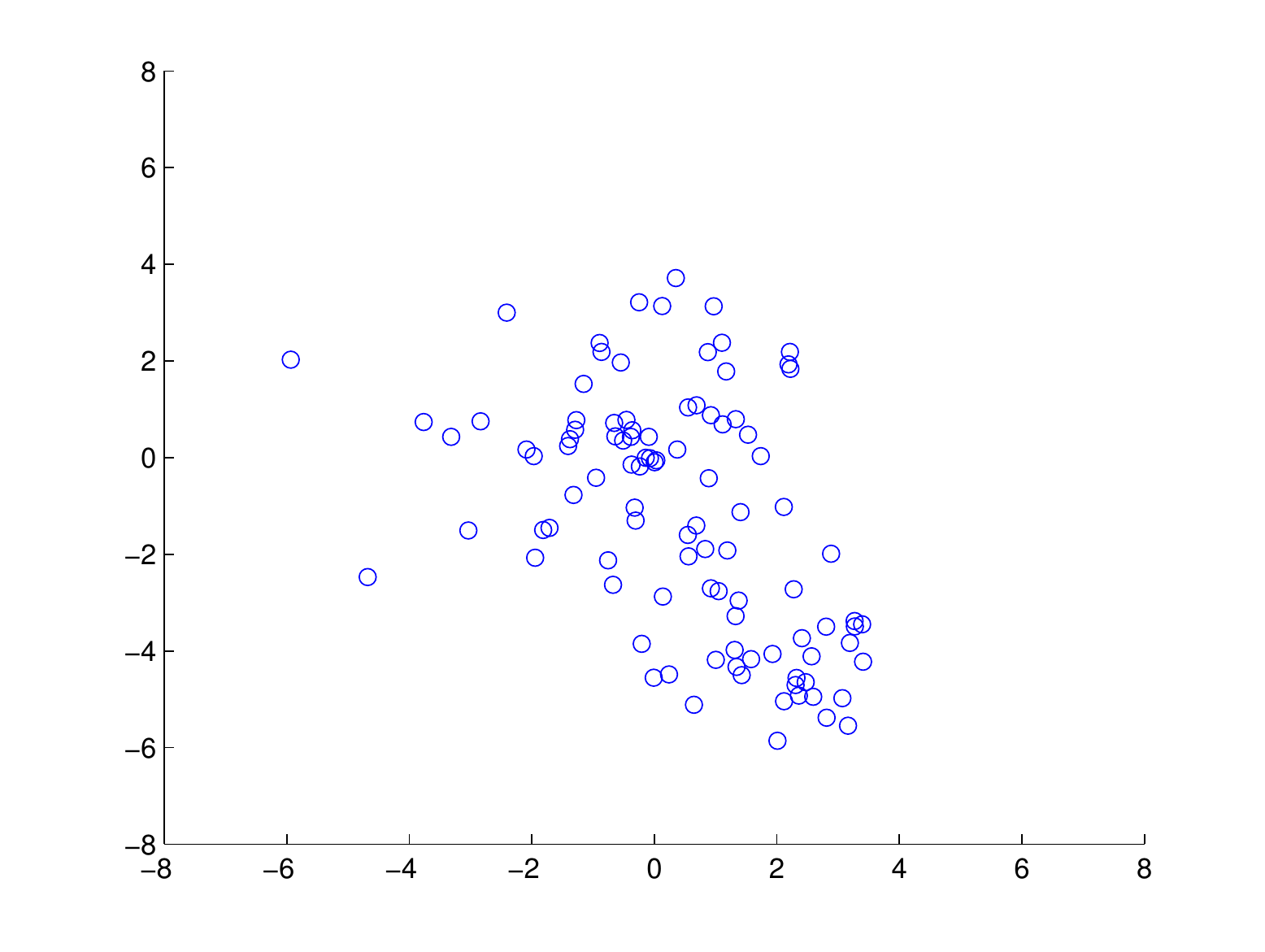}}}
        \vspace{-0.3cm}
        \caption{Typical GMMs generated in our datasets as well as their corresponding samples. One can see that it would be hard for even a human to predict the true number of components, yet the Double-Basis estimator does a good job. \label{fig:typ-GMM} }
\end{figure}

\subsubsection*{Density Estimation Details}
Let $M_t = \{\alpha\ :\ \kappa_\alpha(\nu,\gamma) \leq t \} = \{\alpha_1,\ldots,\alpha_S\}$. First note that: 
\begin{align}
&\EE{\norm{p_i-\tp_i}_2^2} \nonumber \\
=& \EE{\Norm{ \sum_{\alpha\in\Z} a_{\alpha}(P_i)\varphi_{\alpha} -\sum_{\alpha\in M_t} a_{\alpha}(\hP_i) \varphi_{\alpha}}_2^2} \nonumber \\
=& \E\Bigg[\int_{\Lambda^l} \Bigg(\sum_{\alpha\in M_t} (a_{\alpha}(P_i)-a_{\alpha}(\hP_i))\varphi_{\alpha}(x) \nonumber \\
&\quad\quad\quad\quad+\sum_{\alpha\in M_t^c} a_{\alpha}(P_i) \varphi_{\alpha}(x)\Bigg)^2 \ud x\Bigg] \nonumber \\
=& \E \Bigg[\int_{\Lambda^l} \sum_{\alpha\in M_t} \sum_{\rho\in M_t} (a_{\alpha}(P_i)-a_{\alpha}(\hP_i))(a_{\rho}(P_i)-a_{\rho}(\hP_i)) \nonumber\\
&\qquad\qquad\qquad\qquad\varphi_{\alpha}(x)\varphi_{\rho}(x) \ud x\Bigg] \nonumber\\
&+ 2\E \Bigg[\int_{\Lambda^l} \sum_{\alpha\in M_t} \sum_{\rho\in M_t^c} (a_{\alpha}(P_i)-a_{\alpha}(\hP_i))a_{\rho}(P_i) \nonumber\\
&\qquad\qquad\qquad\qquad\qquad\varphi_{\alpha}(x)\varphi_{\rho}(x) \ud x\Bigg]\nonumber\\
&+ \EE{\int_{\Lambda^l} \sum_{\alpha\in M_t^c} \sum_{\rho\in M_t^c} a_{\alpha}(P_i)a_{\rho}(P_i)\varphi_{\alpha}(x)\varphi_{\rho}(x) \ud x} \nonumber \\
=& \EE{\sum_{\alpha\in M_t} (a_{\alpha}(P_i)-a_{\alpha}(\hP_i))^2 }+\EE{\sum_{\alpha\in M_t^c} a_{\alpha}^2(P_i) } \label{eq:derisk},
\end{align}
where the last line follows from the orthonormality of $\{\varphi \}_{\alpha\in\Z}$.
Furthermore, note that $\forall P_i\in \calI$:
\begin{align}
\sum_{\alpha\in M_t^c} a_{\alpha}^2(P_i) =& \frac{1}{t^2} \sum_{\alpha\in M_t^c} t^2a_{\alpha}^2(P_i) \nonumber\\
\leq& \frac{1}{t^2} \sum_{\alpha\in\Z} \kappa^2_\alpha(\nu,\gamma)a_{\alpha}^2(P_i) \nonumber\\
\leq& \frac{\bA}{t^2} \label{eq:turn_err}.
\end{align}
Also,
\begin{align*}
\EE{(a_{\alpha}(P_i)-a_{\alpha}(\hP_i))^2 } =& \left(\EE{a_{\alpha}(\hP_i)}-a_{\alpha}(P_i)\right)^2 \\
&+ \Var\left[a_{\alpha}(\hP_i)\right].
\end{align*}
Clearly, $a_{\alpha}(\hP_i)$ is unbiased from \eqref{eq:coef-hat}. Also,
\begin{align*}
\Var\left[a_{\alpha}(\hP_i)\right] =& \frac{1}{n_i^2}\sum_{j=1}^{n_i} \Var\left[\varphi_\alpha(X_{ij})\right]\\
\leq& \frac{n_i\mxphi^2}{n_i^2}\\
=&O(n_i^{-1}),
\end{align*}
where $\mxphi \equiv \max_{\alpha\in\Z^l}\norm{\varphi_\alpha}_\infty$. Thus,
\begin{align*}
\EE{\norm{p_i-\tp_i}_2^2} \leq \frac{C_1|M_t|}{n_i} + \frac{C_2}{t^2}.
\end{align*}
First note that if we have a bound $\forall \alpha\in M_t,\ |\alpha_i|\leq c_i$ then $|M_t|\leq \prod_{i=1}^l(2c_i+1)$, by a simple counting argument. Let $\lambda=\mathrm{argmin}_i\nu_i^{2\gamma_i}$. For $\alpha\in M_t$ we have: 
\begin{align*}
\sum_{i=1}^l|\alpha_i|^{2\gamma_i} \leq \frac{1}{\nu_\lambda^{2\gamma_\lambda}}\sum_{i=1}^l(\nu_i|\alpha_i|)^{2\gamma_i} = \frac{\kappa^2_\alpha(\nu,\gamma)}{\nu_\lambda^{2\gamma_\lambda}} \leq \frac{t^2}{\nu_\lambda^{2\gamma_\lambda}},
\end{align*} 
and
\begin{align*}
|\alpha_i|^{2\gamma_i} \leq \sum_{i=1}^l|\alpha_i|^{2\gamma_i} \leq {t^2}{\nu_\lambda^{-2\gamma_\lambda}} \implies |\alpha_i| \leq \nu_\lambda^{-\frac{\gamma_\lambda}{\gamma_i}}t^{\frac{1}{\gamma_i}}.
\end{align*}
Thus, $|M_t|\leq \prod_{i=1}^{l}(2\nu_\lambda^{-\frac{\gamma_\lambda}{\gamma_i}}t^{\frac{1}{\gamma_i}}+1)$. Thus, $|M_t|= O\left( t^{\gamma^{-1}} \right)$ where $\gamma^{-1}=\sum_{j=1}^{l}\gamma_j^{-1}$.
Hence,
\begin{align*}
&\frac{\partial}{\partial t} \left[\frac{C_1t^{\gamma^{-1}}}{n_i} + \frac{C_2}{t^2}\right] = \frac{C_1't^{\gamma^{-1}-1}}{n_i} - C_2't^{-3}= 0 & &\implies \\
&t = C n^{\frac{1}{2+\gamma^{-1}}} & &\implies\\
&\EE{\norm{p_i-\tp_i}_2^2} \leq \frac{C_1|M_t|}{n_i} + \frac{C_2}{t^2} =O\left(n_i^{-\frac{2}{2+\gamma^{-1}}}\right).
\end{align*}
Furthermore, by \eqref{eq:derisk} we may see that for $G_i \in \calI$, if 
\begin{align*}
\bar{g}_i = \sum_{\alpha\in\Z} a_{\alpha}(G_i)\varphi_{\alpha},
\end{align*}
then
\begin{align*}
\EE{\norm{g_i-\bar{g}_i}_2^2} = O\left(n_i^{-\frac{2}{2+\gamma^{-1}}}\right).
\end{align*}

\end{document}